\renewenvironment{align}{
    \begin{equation}
        \begin{aligned}
}{
        \end{aligned}
    \end{equation}
}
\renewenvironment{align*}{
    \begin{equation*}
        \begin{aligned}
}{
        \end{aligned}
    \end{equation*}
}
\newcommand{\+}[1]{\mathbf{#1}}
\renewcommand{\P}{\mathbb{P}}
\renewcommand{\d}{\mathrm{d}}
\newcommand{\R}{\mathbb{R}}
\newcommand{\N}{\mathbb{N}}
\newcommand{\E}{\mathbb{E}}
\newcommand{\expectation}[1]{\E\brackets{#1}}
\renewcommand{\O}{\mathcal{O}}
\DeclareMathOperator{\Poisson}{Poisson}
\DeclareMathOperator{\Var}{Var}
\renewcommand{\epsilon}{\varepsilon}
\newcommand{\norm}[1]{\left\| #1 \right\|}
\newcommand{\br}[1]{\left( #1 \right)}
\newcommand{\abs}[1]{\left| #1 \right|}
\newcommand{\cbr}[1]{\left\{ #1 \right\}}
\newcommand{\triplenorm}[1]{{\left\vert\kern-0.25ex\left\vert\kern-0.25ex\left\vert #1 
    \right\vert\kern-0.25ex\right\vert\kern-0.25ex\right\vert}}
\newcommand{\note}[1]{}
\newtheorem{definition}{Definition}[section]
\newtheorem{proposition}{Proposition}[section]
\newtheorem{theorem}{Theorem}[section]
\newtheorem{lemma}{Lemma}
\newtheorem{assumption}{Assumption}
\newtheorem{remark}{Remark}
\newcommand{\ptitle}[1]{\textbf{#1}}
\newcommand{\ind}{\mathbbm{1}}
\newcommand{\Hcal}{\mathcal{H}}
\newcommand{\Ecal}{\mathcal{E}}
\newcommand{\Ucal}{\mathcal{U}}
\newcommand{\parentheses}[1]{\left( #1 \right)}
\newcommand{\brackets}[1]{\left[ #1 \right]}
\newcommand{\deltaequal}{\overset{\Delta}{=}}
\def\eqref#1{equation~\ref{#1}}
\def\1{\bm{1}}
\def\hbar{{\bar{h}}}
\def\rmU{{\mathbf{U}}}
\def\mA{{\mathbf{A}}}
\def\mC{{\mathbf{C}}}
\def\mS{{\mathbf{S}}}
\def\mT{{\mathbf{T}}}
\def\mU{{\mathbf{U}}}
\def\mV{{\mathbf{V}}}
\def\mW{{\mathbf{W}}}
\def\mX{{\mathbf{X}}}
\def\mY{{\mathbf{Y}}}
\def\mZ{{\mathbf{Z}}}
\def\mLambda{{\mathbf{\Lambda}}}
\def\mSigma{{\mathbf{\Sigma}}}
\DeclareMathAlphabet{\mathsfit}{\encodingdefault}{\sfdefault}{m}{sl}
\SetMathAlphabet{\mathsfit}{bold}{\encodingdefault}{\sfdefault}{bx}{n}
\def\Bcal{{\mathcal{B}}}
\def\Ccal{{\mathcal{C}}}
\def\Ecal{{\mathcal{E}}}
\def\Fcal{{\mathcal{F}}}
\def\Hcal{{\mathcal{H}}}
\def\Ical{{\mathcal{I}}}
\def\Lcal{{\mathcal{L}}}
\def\Tcal{{\mathcal{T}}}
\def\Ucal{{\mathcal{U}}}
\def\mShat{{\hat{\mS}}}
\def\mUhat{{\hat{\mU}}}
\def\mVhat{{\hat{\mV}}}
\def\Sbb{{\mathbb{S}}}
\def\Ybb{{\mathbb{Y}}}
\newcommand{\idea}[1]{}
\newcommand{\borelian}{\Ical}
\newcommand{\Sbbhat}{\hat{\Sbb}}
\newtheorem{property}{Property}
\newcommand{\changes}[1]{{
\color{black}#1
}}
\newenvironment{makechanges}{}{}
\definecolor{matplotlibblue}{HTML}{1F77B4}
\definecolor{matplotliborange}{HTML}{FF7F0E}
\definecolor{matplotlibgreen}{HTML}{2CA02C}
\title{Multiresolution Analysis and Statistical Thresholding on Dynamic Networks}
\newcommand{\email}{\texttt}
\author{
Raphaël Romero \\ 
Ghent University \\
\email{raphael.romero@ugent.be}
\And
Tijl De Bie \\
Ghent University \\
\email{tijl.debie@ugent.be}
\And
Nick Heard \\
Imperial College London \\
\email{n.heard@imperial.ac.uk} 
\And
Alexander Modell \\
Imperial College London \\
\email{a.modell@imperial.ac.uk}
}
\date{}
\newcommand{\Lambdabb}{\mathbb{\Lambda}}
\renewcommand{\Ybb}{\mathbb{Y}}
\renewcommand{\Sbb}{\mathbb{S}}
\begin{document}
\maketitle
\def \us{ANIE~}
\begin{abstract}
   Detecting structural change in dynamic network data has wide-ranging applications. Existing approaches typically divide the data into time bins, extract network features within each bin, and then compare these features over time. This introduces an inherent tradeoff between temporal resolution and statistical stability of the extracted features. 
   Despite this tradeoff, reminiscent of time–frequency tradeoffs in signal processing, most methods rely on a \emph{fixed temporal resolution}. Choosing an appropriate resolution parameter is typically difficult, and can be especially problematic in domains like cybersecurity, where anomalous behavior may emerge at multiple time scales.
   We address this challenge by proposing \us (\textbf{A}daptive \textbf{N}etwork \textbf{I}ntensity \textbf{E}stimation), a multi-resolution framework designed to automatically identify the time scales at which network structure evolves, enabling the joint detection of both rapid and gradual changes.
   Modeling interactions as Poisson processes, our method proceeds in two steps: (1) estimating a low-dimensional subspace of node behavior, and (2) deriving a set of novel \emph{empirical affinity coefficients} that quantify change in interaction intensity between latent factors and support statistical testing for structural change across time scales.
   We provide theoretical guarantees for subspace estimation and the asymptotic behavior of the affinity coefficients, enabling model-based change detection. Experiments on synthetic networks show that \us adapts to the appropriate time resolution, and is able to capture sharp structural changes while remaining robust to noise. Furthermore, applications to real-world data showcase the practical benefits of \us’s multiresolution approach to detecting structural change over fixed resolution methods.
    An open-source implementation of the method is available at \url{https://github.com/aida-ugent/anie}.
\end{abstract}

\section{Introduction}

Understanding dynamic networks, namely datasets taking the form of sequences of interaction events $(u,v,t)$ between nodes $u$ and $v$ at timestamp $t$ has wide-ranging applications in domains such as contact tracing\citep{fournetContactPatternsHigh2014}, cybersecurity\citep{passinoLinkPredictionDynamic2021} and urban mobility studies \citep{,alessandrettiMultimodalUrbanMobility2023,jonesMultilayerRandomDot2021}. Despite this domain diversity, temporal networks commonly exhibit two fundamental types of structure: \textbf{cross-sectional structure}, where the network is seen as a graph evolving over time, and \textbf{longitudinal structure}, where the data at its finest resolution is best modeled as a collection of point processes \cite{passinoMutuallyExcitingPoint2022,NaokiMasuda,modellIntensityProfileProjection2023}. 

At its core, change detection in such networks involves understanding how these two types of structure interact. However, doing so involves an inherent tradeoff. On one hand, identifying cross-sectional structure-such as communities-requires aggregating events over a sufficiently wide time window to achieve statistical stability. On the other hand, imposing a certain resolution of analysis may obscure transient events which occur at higher temporal resolutions. This mirrors the time–frequency tradeoff in signal processing, where narrow time windows reveal high-frequency details but miss low-frequency trends, and wide windows improve frequency resolution at the cost of temporal localization.

In practice, the choice of an appropriate time resolution is a challenge which manifests in a variety of ways, such as selecting the number of timesteps at which to evaluate dynamic node embeddings \cite{rastelliContinuousLatentPosition2023,romeroGaussianEmbeddingTemporal2023}, or selecting a bandwidth in order to derive smooth temporal signals from the dynamic network \cite{modellIntensityProfileProjection2023}. 
Often this challenge is resolved by selecting a resolution which seems to correspond to some characteristic period or frequency derived a priori from domain knowledge \cite{gauvinDetectingCommunityStructure2014,huangLaplacianChangePoint2024,modellIntensityProfileProjection2023}. However in applications such as cybersecurity \cite{passinoMutuallyExcitingPoint2022,passinoStatisticalCybersecurityBrief2023a,corneckOnlineBayesianChangepoint2024}, where time-localization of anomalous event is essential, or more broadly community detection \citep{rossettiCommunityDiscoveryDynamic2019}, where node behaviors may align at different resolution levels, such an arbitrary choice of resolution is not satisfactory. 

To resolve this paradox, we highlight that cross-sectional structure in real-world networks typically manifests at several resolution levels simultaneously. For instance in social networks, community events of a few hours coexist with gradually evolving friendship structures (weeks to months). Similarly, in cybersecurity, malicious activity might include both rapid bursts of suspicious connections and slowly evolving patterns designed to evade detection \citep{passinoStatisticalCybersecurityBrief2023a}. On the other hand, in bike-sharing networks \citep{choiCapturingUsagePatterns2023}, interaction patterns exhibit daily rhythms (commuting), weekly cycles (workday vs. weekend usage), and seasonal trends (weather effects). 

In this paper, we introduce \us (\textbf{A}daptive \textbf{N}etwork \textbf{I}ntensity \textbf{E}stimation), a novel approach for detecting changes in dynamic networks across multiple temporal resolutions. Our approach takes inspiration in recent work in multiresolution analysis of point process \cite{bartlettSpectralAnalysisPoint1963,talebMultiresolutionAnalysisPoint2021,demirandaEstimationIntensityNonhomogeneous2011,cohenWaveletSpectraMultivariate2020}, and more generally wavelet analysis \cite{mallatWaveletTourSignal2009}, and adapts them to the network domain.

\ptitle{Contributions. }
In Section \ref{sec:problem_setting}, we formulate change detection as a statistical signal processing problem, where the goal is to recover edge-level temporal signals from noisy dynamic network observations. In Section \ref{sec:method}, we present a new statistical method for multi-resolution change detection, supported by theoretical guarantees. In Section \ref{sec:experiments}, we evaluate our method on both synthetic and real-world datasets, demonstrating that \us outperforms fixed-resolution approaches by effectively capturing changes at multiple time scales in dynamic networks.

\section{Related Work\label{sec:related-work}}
The proposed work lies at the intersection of several fields, which we briefly overview below.

\ptitle{Change Detection in Dynamic Networks. }
The task of understanding the temporal evolution of dynamic network structure has been approached from various angles. One common approach is to view it as a dimensionality reduction task where the goal is to construct time-varying statistical summaries using for instance node embeddings \citep{rastelliContinuousLatentPosition2023, modellIntensityProfileProjection2023}, dynamic extensions of spectral clustering \citep{vonluxburgTutorialSpectralClustering2007, newmanFindingCommunityStructure2006, rubin-delanchyStatisticalInterpretationSpectral2022, modellIntensityProfileProjection2023}, latent space models \citep{rastelliContinuousLatentPosition2023, romeroGaussianEmbeddingTemporal2023}, and tensor factorization methods \citep{koldaTensorDecompositionsApplications2009, rabanserIntroductionTensorDecompositions2017, wuTensorBasedFrameworkStudying2019, aguiarTensorFactorizationModel2024, gauvinDetectingCommunityStructure2014, tarres-deulofeuTensorialBipartiteBlock2019}, which represent temporal structure through time-evolving latent factors.
A related line of work focuses specifically on detecting change points, often in an online setting, by comparing network summaries across time windows \citep{huangLaplacianChangePoint2024, corneckOnlineBayesianChangepoint2024}. These methods typically rely on fixed time intervals, which assumes short-term stationarity. \changes{We note that \cite{yuOptimalNetworkOnline2021} operates in an online setting using an adapted CUSUM statistic, making direct comparison with our proposed offline method difficult.}

\ptitle{Wavelets and Point Process Intensity Estimation. }
Wavelet analysis has been proposed as a principled approach to addressing the time–frequency tradeoff in signal processing \citep{mallatWaveletTourSignal2009, torrencePracticalGuideWavelet1998}, and has proven effective in estimating the intensity of single point processes \citep{brillingerUsesCumulantsWavelet1994, donohoIdealSpatialAdaptation1994, demirandaEstimationIntensityNonhomogeneous2011, cohenWaveletSpectraMultivariate2020, talebMultiresolutionAnalysisPoint2021, zhangPoissonIntensityEstimation, kolaczykEstimationIntensitiesBurstPoisson1996}, where key features of the intensity function often appear at multiple resolution levels. To our knowledge, our work is the first to integrate these wavelet-based point process analysis with a low-rank decomposition of cross-sectional network structure. 

\ptitle{Functional Data Analysis. }
Functional Data Analysis (FDA) has been widely used to analyze data with a continuous time dimension \citep{ramsayFunctionalDataAnalysis2005}, and has been extended to multivariate settings \citep{happMultivariateFunctionalPrincipal2018}. Recent work has also adapted FDA to point process observations \citep{picardPCAPointProcesses2024}.
This work is the first to explicitly apply similar techniques to analyzing the temporal structure of dynamic networks.

\idea{
    Our approach builds on recent advances in wavelet analysis of point processes \citep{talebMultiresolutionAnalysisPoint2021, cohenWaveletSpectraMultivariate2020, demirandaEstimationIntensityNonhomogeneous2011} and extends them to the network domain.
    allowing for the identification of complex interaction patterns across multiple time scales. By leveraging multi-scale decompositions, \us captures both gradual trends and abrupt changes in network structure, providing insights that would be missed by fixed-bandwidth methods.}

\section{Multiresolution Change Detection in Dynamic Networks \label{sec:problem_setting}}
This section gives some context to our proposed method, by casting the problem of detecting significant change in dynamic networks as a network intensity estimation problem.

\subsection{A Low-Rank Poisson Process Model}

The work considers \textbf{dynamic network data}, represented by an ordered sequence of interaction events $\Ecal=\{(u_m,v_m,t_m)\}_{m=1}^M$, where the $m$-th event represents an interaction between nodes $u_m,v_m$ belonging to a set of nodes $\Ucal\deltaequal\{1,\ldots,N\}$ at time $t_m$, and the timestamps are provided in increasing order $0<t_1<\ldots<t_M<T$. We represent this data more concisely using a matrix of counting measures $\Ybb = (\Ybb_{uv})_{u,v\in\Ucal^2}$, named the \textbf{adjacency measure}, and defined on the Borel $\sigma$-algebra $\Bcal(\Tcal)$ of the time interval $\Tcal=[0,T]$. For any Borel set $\borelian \subset \Tcal$, the element $\Ybb_{uv}(\borelian ) = \int_{\borelian}d\Ybb_{uv}(t)=\sum_{t\in\Ecal_{uv}}\ind_{\Ical}(t) \in\N$ of the matrix $\Ybb(\borelian)$ counts the number of interactions between nodes $u$ and $v$ that occurred within time interval $\borelian $. 
We model the edge-level interactions as arising from independent Inhomogeneous Poisson Processes. Mathematically, this means that there exists a matrix of absolutely continuous \textbf{intensity measures} $\Lambdabb = \big(\borelian  \mapsto \Lambdabb_{uv}(\borelian )\big)_{u,v}$ such that for any Borel set $\borelian  \subset \Tcal$, the count of interactions between any node pair $u,v$  on $\borelian $ is distributed as $\Ybb_{uv}(\borelian ) \sim \text{Poisson}(\Lambdabb_{uv}(\borelian ))$. We denote this using the shorthand notation $\Ybb \sim \text{PoissonProcess}(\Lambdabb)$. This work relies critically on a low-rank assumption, where we assume that the interactions between nodes may be explained by means of a measure of affinity between unobserved latent factors over time. 
We formalize this intuition in the following definition.

\begin{definition}[Common Subspace Independent Processes (COSIP)]
    \label{def:cosip}
    A dynamic network $\Ybb$ is said to follow the \textbf{COSIP} model, i.e. $\Ybb\sim COSIP(\mU, \Sbb)$ if  $\Ybb \sim \text{PoissonProcess}(\Lambdabb)$, and for all borelian  $\borelian  \subset \Tcal$, $\Lambdabb(\borelian ) = \mU \Sbb(\borelian ) \mU^\top$ where $\mU \in \mathbb{R}^{N \times D}$ is a \textbf{subspace matrix} whose $D$ columns are orthonormal, $\Sbb(\borelian ) \in \mathbb{R}^{D \times D}$ is a low-dimensional matrix measure, named the \textbf{affinity measure}, and $D$ is a latent dimension, or rank of the model.
\end{definition}

This model extends the COSIE model from \citep{arroyoInferenceMultipleHeterogeneous2021} to the continuous time setting.
A special case of COSIP is the \textbf{Dynamic Stochastic Block Model} (DSBM), where $\mU\in \{0,1\}^{N\times D}$ is a community assignment matrix, and $\Sbb(\borelian )$ specifies interaction rates between blocks. The COSIP model doesn't restrict the subspace matrix $\mU$ to be binary, but assumes that the dynamic network distribution globally has low-rank. While the model is defined in terms of the measures $\Lambdabb$ and $\Sbb$, both of them are assumed to admit respective densities $\mLambda(t)$ and $\mS(t)$ with respect to Lebesgue measure on $\Tcal$, such that for any Borel set $\borelian \subset \Tcal$, $\Lambdabb(\borelian ) = \int_{\borelian} \mLambda(t) dt$ and $\Sbb(\borelian ) = \int_{\borelian} \mS(t) dt$. We refer to them as the \textbf{intensity function} and the \textbf{affinity function} respectively. 

\idea{\begin{remark}[Interpreting the latent dimensions]
    While in the tradional stochastic block model each latent dimension corresponds to a well defined, tangible community.
    In the dynamic case, the interactions may arise from a combination of several community structures, or "layers". For instance in a network of social interactions between school students, interactions may be explained by two students belonging to the same class, or to the same group.
    Of course, these partitions tend to be correlated with each other, in the sense that friendships will more likely occur within the same class than outside of the class. However, in theory a given student can be long to every (class, friend group) pairs, and often to several of these classes.
    The latent dimensions captured by this model correspond to these combinations of classes.
\end{remark}}

\subsection{Change Detection as an Intensity Estimation Problem}
A naive approach to estimating the intensity function \(\mLambda_{uv}(t)\) is to use a histogram-based estimator such as $\hat{\mLambda}_{uv}(t) = B \sum_{b=1}^{B} \mathbf{1}_{\Ical_b}(t) \, \Ybb_{uv}(\Ical_b)$
where \(\{\Ical_b\}_{b=1}^B\) is a partition of the time interval \([0, T]\). However, such a naive edge-level estimator will tend to reflect not only meaningful structural changes, but also spurious fluctuations due to sparsity and edge-level randomness.  In contrast, under the COSIP model, the observed data \(\Ybb\) is viewed as a \emph{noisy observation} of a latent intensity measure $\Lambdabb$, whose density \(\mLambda(t)\) is decomposed into a node-level subspace matrix \(\mU\) and a time-varying affinity function \(\mS(t)\), and the intensity function is expressed as a sum over pairs of latent factors, thus borrowing strength across all node pairs:
\[
\mLambda_{uv}(t) = \sum_{p, q \in [D]^2} \mU_{up} \, \mU_{vq} \, \mS_{pq}(t).
\]

Crucially, this formulation unifies the two central goals of our work. First, estimating $\mU$ reveals the network’s \emph{cross-sectional structure}—a set of latent factors that capture how nodes align in their behavior over time. Second, estimating the time-varying affinity $\mS(t)$ entails the identification of \emph{structural change points} corresponding to features—for instance abrupt shifts or singularities—in the temporal signal $\mS(t)$. In this way, detecting changes in network structure is naturally framed as the problem of \emph{identifying meaningful temporal features of the affinity function}. As discussed in the introduction, such features often manifest at multiple resolution levels, motivating the use of wavelets for their detection.

\idea{

Using the notation introduced above, we can interpret the dynamic network data \(\Ybb\) as a \emph{noisy observation} of an underlying, characterized by the node-level subspace matrix \(\mU\) and a time-varying affinity measure \(\boldsymbol{\mathcal{S}}\), or equivalently, its density \(\mS(t)\) with respect to the Lebesgue measure.

This works addresses the problem of \textbf{network intensity estimation}, where the goal is to \emph{estimate the ground truth intensity $\Lambdabb$ from a dynamic graph $\Ybb$.} Under the COSIP model, this corresponds to estimating the subspace matrix $\mU$, and further estimating the affinity measure $\mShat$, or equivalently its density with respect $\\mS(t)$ with respect to the Lebesgue measure. 

While this problem may seem abstract in nature, we argue that it simultaneously achieves two tasks at the same time. On the one hand, estimating the matrix $\mU$ may be interpreted as discovering \textbf{temporal communities} in the dynamic network, namely a small number of latent factors along which nodes' behaviors tend to align over time. On the other hand, estimating the affinity $\\mS(t)$ relates closely to detecting significant \textbf{change points} in how these latent factors (or dynamic communities) interact with each other. 

A simple, naive approach to estimating the intensity function \(\boldsymbol{\Lambda}_{uv}(t)\) is to use a histogram-based estimate, such as:
m$\mLambda_{uv}(t) = B \sum_{b=1}^{B} \mathbf{1}_{\Ical_b}(t) \, \Ybb_{uv}(\Ical_b)$
where \(\{\Ical_b\}_{b=1}^B\) is a partition of the time interval \([0, T]\). However, this estimate tends to capture not only the underlying structure but also incidental randomness—such as whether or not a particular pair of nodes happened to interact during a given time interval—leading to noisy and potentially misleading intensity profiles.

In contrast, a \emph{low-rank model} such as the COSIP model (Definition~\ref{def:cosip}) aggregates information across all node pairs to estimate smoother and more robust intensity functions. In this model, the intensity function is expressed as
m$\mLambda_{uv}(t) = \sum_{p, q \in [D]} \mU_{up} \, \mU_{vq} \, \mS_{pq}(t)$
where \(\mU \in \mathbb{R}^{N \times D}\) represents latent features for each node, and \(\mS_{pq}(t)\) encodes time-varying affinities between latent dimensions \(p\) and \(q\). This formulation enables the model to generalize across the network by capturing shared temporal patterns in a low-dimensional space.

This works addresses the problem of \textbf{network intensity estimation}, where the goal is to \emph{estimate the ground truth intensity $\Lambdabb$ from a dynamic graph $\Ybb$.} Under the COSIP model, this corresponds to estimating the subspace matrix $\mU$, and further estimating the affinity measure $\mShat$, or equivalently its density with respect $\\mS(t)$ with respect to the Lebesgue measure. 

While this problem may seem abstract in nature, we argue that it simultaneously achieves two tasks at the same time. On the one hand, estimating the matrix $\mU$ may be interpreted as discovering \textbf{temporal communities} in the dynamic network, namely a small number of latent factors along which nodes' behaviors tend to align over time. On the other hand, estimating the affinity $\\mS(t)$ relates closely to detecting significant \textbf{change points} in how these latent factors (or dynamic communities) interact with each other. 

Our work addresses three key related problems: (1) estimating the intensity measure $\Lambdabb$, (2) detecting structural changes in the network, and (3) estimating time-varying embeddings capturing node-level behaviors. These problems are deeply interconnected, with network intensity estimation serving as the foundation for the others.

\textbf{Network Intensity Estimation} refers to the problem of constructing an estimate $\hat{\Lambdabb}$ of the intensity measure $\Lambdabb$ of a network, based on an observation $\Ybb$. Typically the intensity measure is estimated through its density $t\mapsto \mLambda(t)$ with respect to Lebesgue measure, namely a matrix of positive functions such that for any interval $\borelian $ we have $\Lambdabb(\borelian )=\int_{\borelian } \mLambda(t)dt$.

Building on this foundation, we consider the problem of \textbf{Change Detection} in temporal networks. We postulate that a fundamental connection exists between intensity estimation and detecting structural changes in the network. Indeed, due to both network sparsity and bounded intensity, any naive estimate inevitably captures inherent noise. In contrast, an effective intensity estimate must distinguish between: (1) significant changes in interaction rates reflecting actual structural changes, and (2) random fluctuations from the point process nature of the data. Our goal is to develop methods that capture meaningful changes while filtering out noise through statistical identification of important network structure changes.

Finally, the estimated intensity can be used to visulaze node behavior over time in a low dimensional space, a representation known as \textbf{Dynamic Network Embedding}. Given an estimate of the intensity $\hat{\Lambda}$, and a subspace $\mU$ of dimension $D$, we may define a matrix of node embeddings as $\hat{\mX}(t) = \hat{\Lambda}(t) \mU \in\R^{N\times D}$. This matrix represents the latent structure of the network at time $t$ as a set of $D$-dimensional node embeddings, and has the property of being both temporally and structurally coherent \citep{modellIntensityProfileProjection2023}. These embeddings provide a compact representation of the network's structure that evolves over time, enabling various downstream tasks such as visualization, clustering, and anomaly detection.
}

\note{
    \subsection{Wavelet Estimation of the Intensity Measure}
    In this part we introduce a fundamental component of our approach, namely the wavelet decomposition of the intensity measure $\mLambda$, and its empirical version.
    \par\noindent\textbf{Finite Basis Assumption.} We assume that the density of the intensity measure is smooth enough to admit a finite basis expansion. In particular, there exists a finite set of orthonormal functions
    \[
    \{\phi^1,\phi^2,\ldots,\phi^B\}\subset L^2([0,T])
    \]
    such that the score density can be written as
    \[
    S(t)=\sum_{b=1}^B \mS(\phi^b)\,\phi^b(t),
    \]
    which implies that the intensity matrix has the expansion
    \[
    \Lambda(t)=\sum_{b=1}^B \rmU\,\mathbb{\bm{S}}(\phi^b)\,\rmU^\top\,\phi^b(t).
    \]
    
    \par\noindent\textbf{Unbiased Estimation of the Basis Coefficients.} Since the interactions are modeled as inhomogeneous Poisson processes, for any function \(\phi^b\in L^2([0,T])\) we define the empirical coefficient
    \[
    \mathbb{\Ybb}(\phi^b)=\int_{\Tcal}\phi^b(t)\,d\Ybb(t).
    \]
    By the properties of the Poisson process, its expectation is
    \[
    \E\bigl[\Ybb(\phi^b)\bigr]=\int_{\Tcal}\phi^b(t)\,\Lambda(t)\,dt = \Lambdabb(\phi^b).
    \]
    Hence, \(\Ybb(\phi^b)\) is unbiased estimators of the true coefficients \(\Lambdabb(\phi^b)\).
    
    \par\noindent\textbf{Example: Haar Wavelet Basis.} For illustration, consider the Haar wavelet family. The scaling function is defined as
    \[
    \phi(t)=
    \begin{cases}
    1, & t\in[0,1],\\[1mm]
    0, & \text{otherwise},
    \end{cases}
    \]
    and the mother wavelet by
    \[
    \psi(t)=
    \begin{cases}
    1, & t\in[0,\frac{1}{2}),\\[1mm]
    -1, & t\in[\frac{1}{2},1],\\[1mm]
    0, & \text{otherwise}.
    \end{cases}
    \]
    For every scale \(j\ge 0\) and location \(k\in\{0,\ldots,2^j-1\}\), the dilated and translated wavelets are given by
    \[
    \psi_{j,k}(t)=2^{j/2}\,\psi(2^j t - k).
    \]
    
    \begin{align}
        \Lambda(t) = \mLambda(\phi)\phi(t) + \sum_{(j,k) \in \Ical_J} \mLambda(\psi_{j,k}) \psi_{j,k}(t)
    \end{align}
}

\section{\us: Adaptive Network Intensity Estimation\label{sec:method}}
We now introduce \us (Adaptive Network Intensity Estimation), a novel method estimating the intensity measure of dynamic networks under the COSIP model by detecting significant changes in the affinity function.
The method takes as input a dynamic network represented by its corresponding adjacency measure $\Ybb$, and outputs a subspace matrix $\rmU$ and an adaptive intensity estimate $\hat{\mLambda}(t)$. A full algorithmic description of the procedure is provided in the appendix. 

\subsection{Function Spaces and Basis Decomposition\label{sec:func_space}}
\us leverages an orthonormal functional basis $\{\phi^b\}_{b=1}^B$ of the set of square-integrable functions $L^2(\mathcal{T})$. 
For any measure $\mu$ on $\mathcal{T}$ and function $f$, we denote by $\mu(f) = \int_{\mathcal{T}} f(t) d\mu(t)$ the projection of $\mu$ onto $f$. When $\mu$ admits a density $\lambda(t)$ that can be decomposed as $\lambda(t) = \sum_{b=1}^B \beta^b \phi^b(t)$ in this basis, orthonormality implies that the coefficients can be obtained using projection $\beta^b = \mu(\phi^b)$. In particular for a Poisson Process $\Ybb$ with intensity measure $\mu$, the coefficients $\Ybb(\phi^b)$ provide unbiased estimates of $\beta^b$, specifically $\mathbb{E}[\Ybb(\phi^b)] = \beta^b$. This is also valid for the matrix Poisson Process $\Ybb$ considered in this paper. As such, we denote $\Lambdabb(\phi^b)$ for the coefficients of the intensity on the basis and $\Ybb(\phi^b)$ for their empirical estimates.
\begin{figure}[t]
    \centering
    \begin{minipage}{0.49\textwidth}
        \centering
        \includegraphics[width=\textwidth]{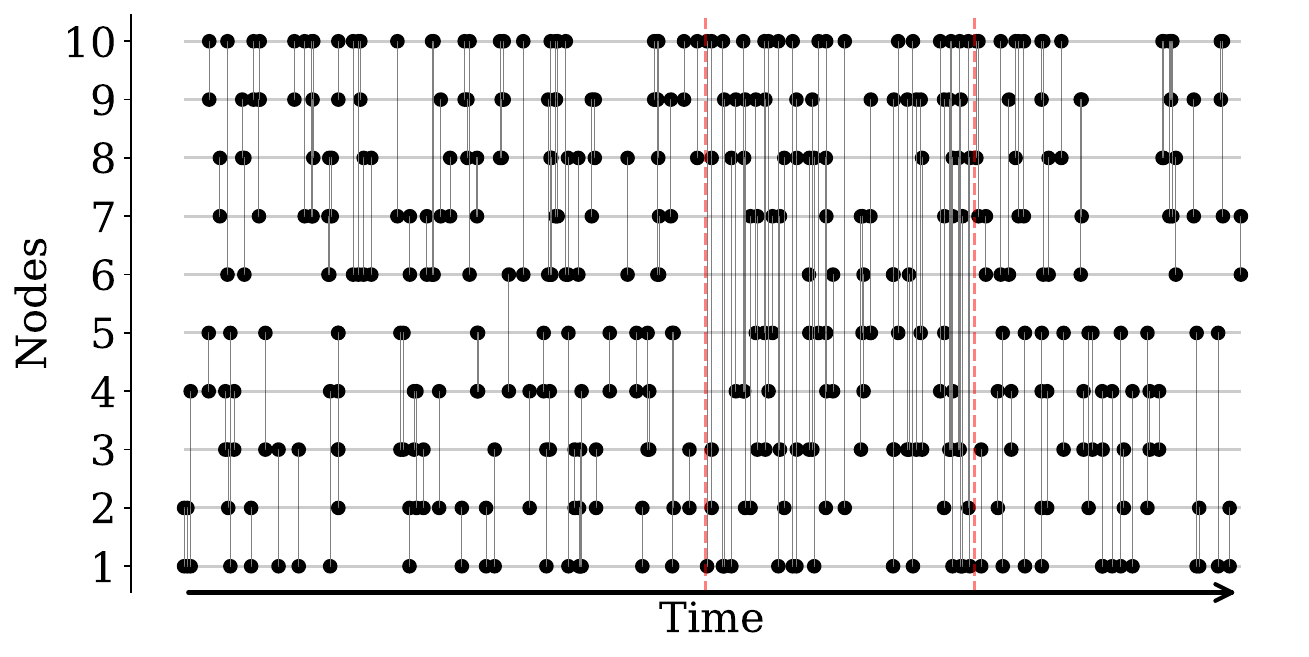}
        \caption*{(a) Longitudinal plot of the interactions.}
    \end{minipage}
    \hfill
    \begin{minipage}{0.49\textwidth}
        \centering
        \includegraphics[width=\textwidth]{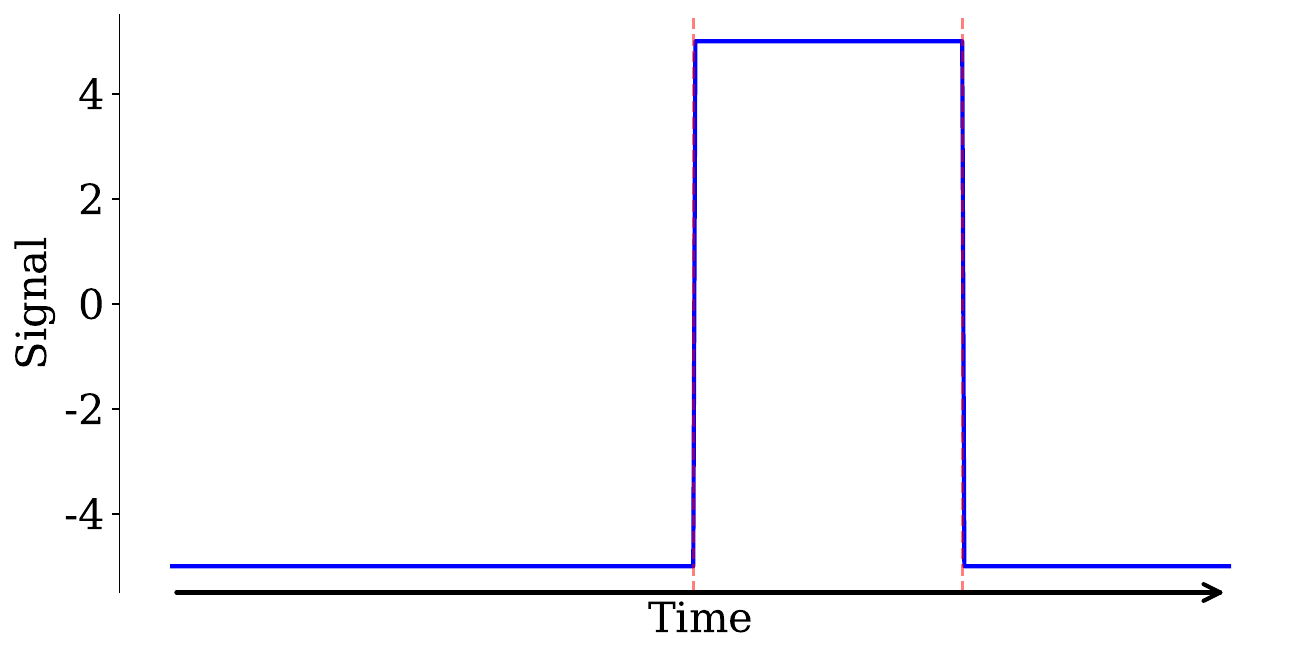}
        \caption*{(b) Difference between inter and intra-community rate over time.}
    \end{minipage}
    \begin{minipage}{0.32\textwidth}
        \centering
        \includegraphics[width=\textwidth]{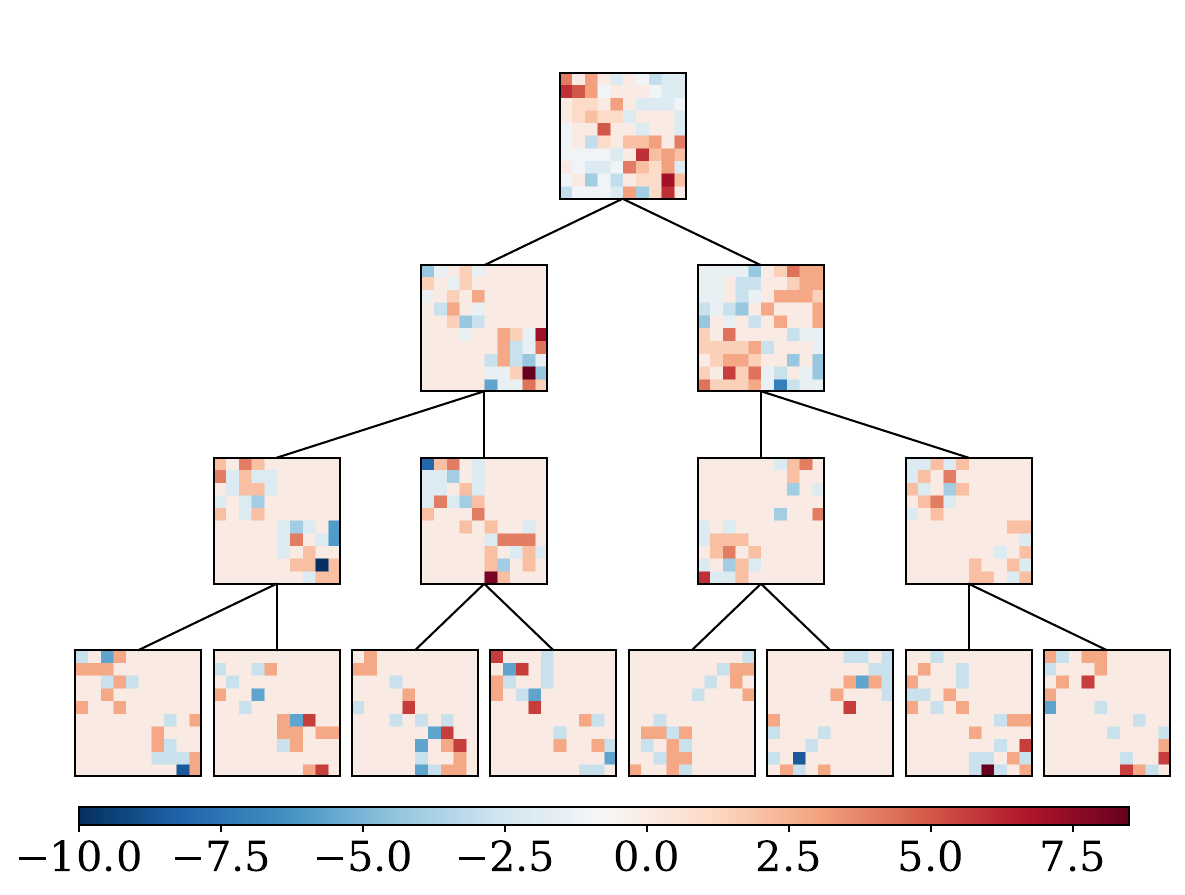}
        \caption*{(c) Step 1: Hierarchy of network Haar wavelet coefficients.}
    \end{minipage}
    \hfill
    \begin{minipage}{0.32\textwidth}
        \centering
        \includegraphics[width=\textwidth]{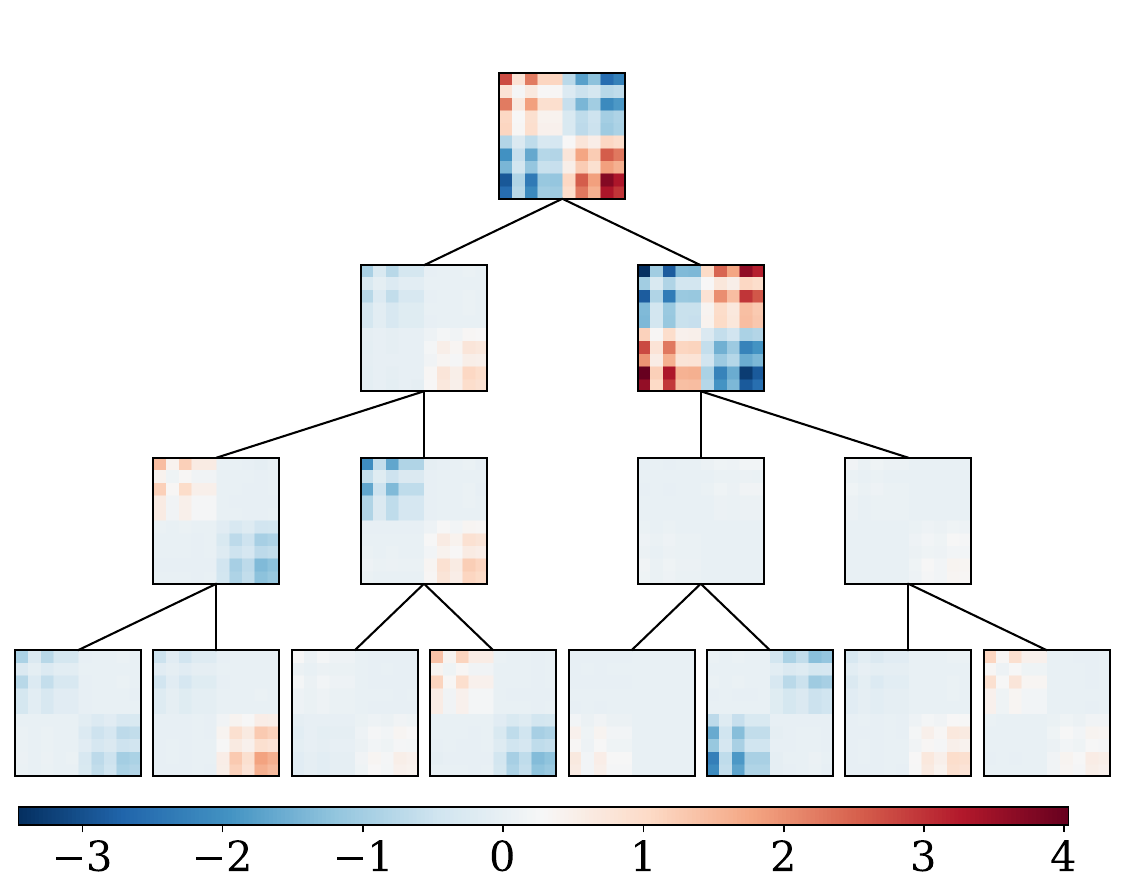}
        \caption*{(d) Step 2: Low-rank approximation of the wavelet coefficients.}
    \end{minipage}
    \hfill
    \begin{minipage}{0.32\textwidth}
        \centering
        \includegraphics[width=\textwidth]{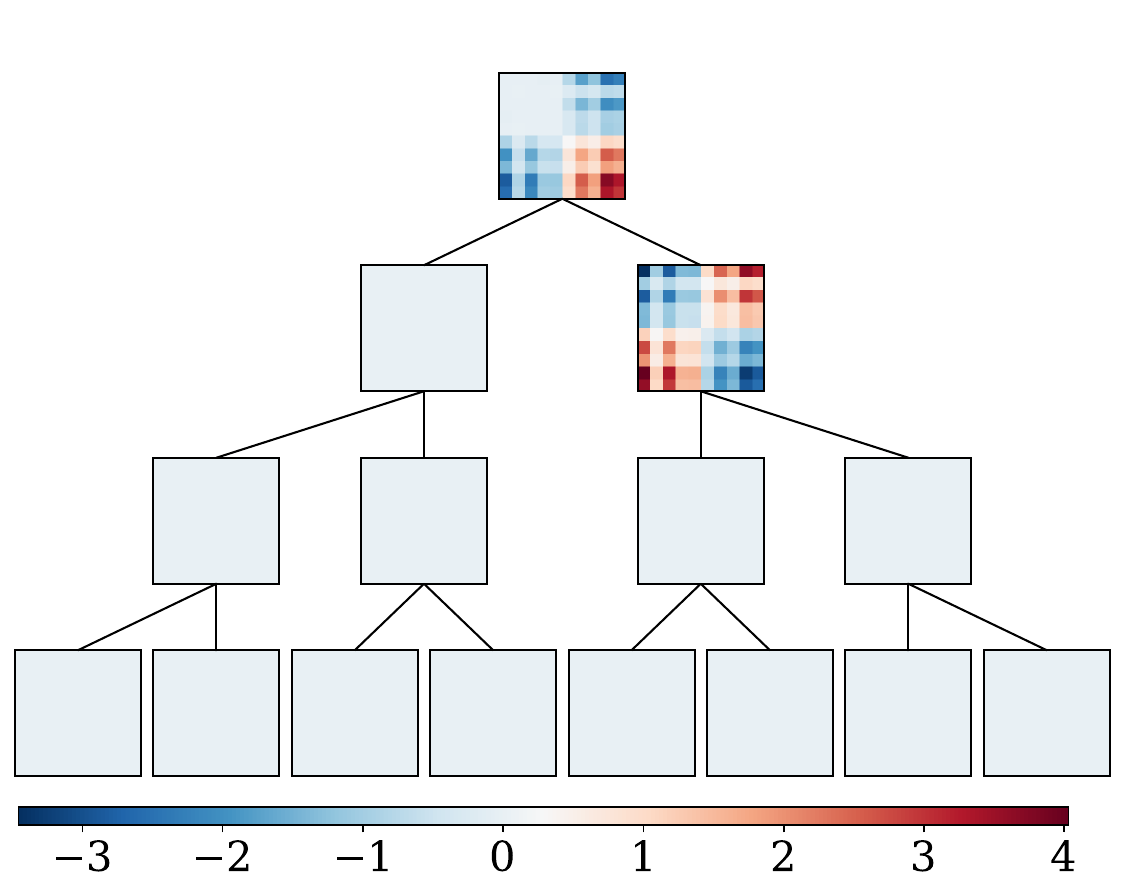}
        \caption*{(e) Step 3: Coefficients after statistical thresholding.}
    \end{minipage}
    \caption{Visualization of the ANIE approach with the Haar wavelet on a dynamic stochastic block model: (a) shows the raw interaction data over time; (b) illustrates the intensity gap between intra-community and inter-community node-pairs; (c) shows the wavelet decomposition of the dynamic network, with each row representing a time scale and each leaf corresponding to a specific time location; (d) shows the low-rank approximation of the wavelet coefficients; finally (e) illustrates the denoising step where statistical thresholding is applied to the coefficients, separating the noise (bottom coefficients in (d)) from the signal (the top right coefficients in (d)).}
    \label{fig:anie_visualization}
\end{figure}

\ptitle{Choice of Basis. }
While any basis of $L^2(\mathcal{T})$ can be used, we illustrate our method using wavelet basis functions, which are known for their effectiveness in adaptive denoising \cite{talebMultiresolutionAnalysisPoint2021,mallatWaveletTourSignal2009,donohoIdealSpatialAdaptation1994}. For a non-orthonormal basis $\{\phi^b\}_{b=1}^B$ spanning $L^2(\mathcal{T})$, we can orthonormalize it using the Gram matrix $\mathbf{G} = \left(\int_{\mathcal{T}} \phi^k(t) \phi^l(t) dt\right)_{k,l=1}^B$. Indeed, defining $\Phi(t) \deltaequal [\phi^1(t),\ldots,\phi^B(t)]^\top$, the rows of the vector $\tilde{\Phi}(t) \deltaequal \mathbf{G}^{-1/2} \Phi(t)$ form an orthonormal basis that can be used directly in our framework. As a result our proposed method is highly flexible and variants of it can be derived using any functional bases in $\Lcal^2(\Tcal)$ used in functional data analysis \cite{ramsayFunctionalDataAnalysis2005}. For example, \textbf{orthonormal bases} include the Fourier basis, wavelet bases (Haar, Daubechies) and Legendre polynomials. On the other hand, \textbf{non-orthonormal bases} include B-splines, natural and cubic splines, classical polynomial bases (which can be orthonormalized as needed using the previous remark).
\idea{
Proof that $\tilde{\Phi}(t)$ forms an orthonormal basis:
\begin{align}
    \int_{\mathcal{T}} \tilde{\Phi}(t)\tilde{\Phi}^\top(t)dt &= \int_{\mathcal{T}} \mathbf{G}^{-1/2}\Phi(t)\Phi^\top(t)\mathbf{G}^{-1/2}dt \\
    &= \mathbf{G}^{-1/2} \left(\int_{\mathcal{T}} \Phi(t)\Phi^\top(t)dt\right) \mathbf{G}^{-1/2} \\
    &= \mathbf{G}^{-1/2}\mathbf{G}\mathbf{G}^{-1/2} = \mathbf{I}_B
\end{align}
}

\ptitle{Haar Wavelet Basis. } We use the \emph{Haar wavelet basis} to illustrate the multi-resolution capabilities of \us. This basis consists of the family $\{f\} \cup \{\psi_{j,k} \mid j \ge 0, k = 0, \ldots, 2^j - 1\}$, where the scaling function is $f(t)=\ind_{[0,1]}(t)$ and each $\psi_{j,k}$ is a scaled and translated version of the mother wavelet $\psi(t)=\ind_{[0,1/2)}(t) - \ind_{[1/2,1]}(t)$, given by $\psi_{j,k}(t) = 2^{j/2}\psi(2^jt - k)$. Here, $j$ controls the scale (resolution) and $k\in \{0,\ldots, 2^j-1\}$ the location for a given scale $j$. As shown in \cite{talebMultiresolutionAnalysisPoint2021}, for a dyadic interval $\Ical_{j,k} = [2^{-j}k, 2^{-j}(k+1)]$ of width $2^{-j}$, the coefficients $\Ybb(\psi_{j,k})$ measure the scaled difference in the number of events between its left and right halves:
\[
\Ybb(\psi_{j,k}) = 2^{-j/2}\big[\Ybb(\Ical_{j+1,2k}) - \Ybb(\Ical_{j+1,2k+1})\big].
\]
These coefficients capture changes in the empirical event intensity across scales and locations, resulting in a hierarchy of coefficient matrices as shown on Figure~\ref{fig:anie_visualization}. Positive values in these matrices indicate fewer events in the right subinterval, while negative values indicate more. As commonly done in wavelet analysis, we use a finite subset of this basis up to a maximum scale $J$.

\begin{makechanges}
\textbf{Note:} Throughout the paper, we use $\phi^{(b)}$ (indexed by $b$) to denote a generic basis function. For the Haar basis, this set includes the scaling function $f$ and all wavelet functions $\psi_{j,k}$ with $j \ge 0$ and $k \in \{0, \ldots, 2^j - 1\}$, with $b$ serving as a single unified index over them.
\end{makechanges}



\subsection{First stage: Low-Rank Decomposition}
\idea{\color{blue} {\bf Alex:}
My view is that the shorthand $\mu(\phi) := \int \phi \:\d\mu$ is too confusing and makes it a bit hard to keep a mental model of what's going on. I think it would be nice to introduce some letters for some of these. Such as assuming $s_{kl}(t) = \sum_{i=1}^B \alpha^i_{kl} \phi^i(t)$ which we write in matrix form as $\+A^i$ then we have $\lambda_{ij}(t) = \sum_{k=1}^B \beta_{ij}^k \phi^k(t)$. Again, we'll combine the coefficients into matrices $\+B^k$. That way we can talk about estimating $\+A^k$ and $\+B^k$ with some $\hat{\+A}^k$ and $\hat{\+B}^k$.

\ptitle{Method. } We estimate the vector $\beta_{ij} = (\beta_{ij}^1, \cdots, \beta_{ij}^K)^\top$ with the standard least squares estimator $\hat{\beta}_{ij} := \+\Theta\hat{\gamma}_{ij}$ where $\+\Theta := \+G^{-1}$ is the inverse of the Gram matrix $\+G$ with entries
\begin{equation*}
    g_{kl} = \int \phi^k(t)\phi^l(t) \:\d t,
\end{equation*}
and $\hat{\gamma}_{ij} \in \R^K$ is the vector with entries
\begin{equation*}
    \hat{\gamma}_{ij}^k := \int \phi^k(t) \:\d \mY_{ij}(t) \equiv \sum_{t \in \*E_{ij}} \phi^k(t). 
\end{equation*}

Then we compute the rank-$r$ truncated singular value decomposition of 
\begin{equation*}
    \hat{\+B} = \begin{bmatrix}
        \hat{\+B}^1 & \hat{\+B}^2 & \cdots & \hat{\+B}^K
    \end{bmatrix},
\end{equation*}
which we denote $\hat{\+U}\+{\Sigma}\hat{\+V}^\top$ (note that $\hat{\+B}$ isn't \emph{equal} to the truncated SVD). Then we estimate $\+A^k$ via
\begin{equation*}
    \hat{\+A}^k := \hat{\+U}^\top \hat{\+B}^k \hat{\+U}.
\end{equation*}
Therefore we also have the estimate
\begin{equation*}
    \hat{\+S}(t) = \sum_k \hat{\+A}^k\phi^k(t).
\end{equation*}

In our theory, we want to talk about the quality of the estimates $\hat{\+U}$ and $\hat{\+S}(t)$.

\ptitle{Theory. } Our model will be
\begin{equation*}
    \+\Lambda(t) = n\+U\+S(t)\+U^\top, \qquad \text{ where } \qquad \+S(t) = \sum_{k=1}^K \+A^k \phi^k(t).
\end{equation*}

We'll assume that
\begin{enumerate}
    \item there exists some fixed matrix-process $\+R(t) \in \R^{d \times d}$ and a sparsity factor $\rho_n \leq 1$, satisfying $n\rho_n = \omega(\log^3(n))$, such that $\+S(t) := \rho_n \+R(t)$;
    \item the eigenvectors are delocalised/incoherent, i.e. they satisfy $\norm{\+U}_{2,\infty} = O(n^{-1/2})$.
\end{enumerate}

We'll have the following two results: there exists an orthogonal matrix $\+W$ such that
\begin{equation*}
    \norm{\hat{\+U}\+W - \+U}_2 = O_{\P}\left( \frac{1}{\sqrt{n\rho_n}}\right)
\end{equation*}
and
\begin{equation*}
    \sup_{t\in[0,1]}\norm{\hat{\+S}(t) - \+S(t)}_2 = O_\P\br{\sqrt{\frac{\rho_n}{n}}}
\end{equation*}
I'll add proofs once we're settled on notation.

{\bf Raphael}:
    If the basis $\{\phi^b\}_{b=1}^B$ is not orthonormal, but still spans $L^2(\mathcal{T})$, we can orthonormalize it as follows.
    
    Define $\Phi(t) \deltaequal [\phi^1(t),\ldots,\phi^B(t)]^\top \in \mathbb{R}^B$ and the Gram matrix $\mathbf{G} = \left(\int_{\mathcal{T}} \phi^k(t) \phi^l(t) dt\right)_{k,l=1}^B$. 
    
    Then the elements of $\tilde{\Phi}(t) \deltaequal \mathbf{G}^{-1/2} \Phi(t)$ form an orthonormal basis, since:
    \begin{align}
        \int_{\mathcal{T}} \tilde{\Phi}(t)\tilde{\Phi}^\top(t)dt &= \int_{\mathcal{T}} \mathbf{G}^{-1/2}\Phi(t)\Phi^\top(t)\mathbf{G}^{-1/2}dt \\
        &= \mathbf{G}^{-1/2} \left(\int_{\mathcal{T}} \Phi(t)\Phi^\top(t)dt\right) \mathbf{G}^{-1/2} \\
        &= \mathbf{G}^{-1/2}\mathbf{G}\mathbf{G}^{-1/2} = \mathbf{I}_B
    \end{align}
    So we can replace the $\phi^b$ with the entries of $\tilde{\Phi} = \mathbf{G}^{-1/2}\Phi$ in the model definition.
}
\ptitle{Basis Decomposition. }
The first step of ANIE decomposes the adjacency measure on the basis $\{\phi^b\}_{b=1}^B$, resulting in \textbf{empirical coefficients}:
$
    \Ybb(\phi^b) \deltaequal \int_{\mathcal{T}} \phi^b(t) d\Ybb(t) = \parentheses{
        \sum_{\tau \in \Ecal_{u,v}} \phi^b(\tau)
    }_{u,v}\in \mathbb{R}^{N \times N}, $
where $\Ecal_{u,v}$ is the set of interaction times between nodes $u$ and $v$. This computation is efficient: for each node pair, we evaluate the basis function at each interaction time and sum the results. Notably, the resulting coefficient matrices inherit the sparsity of the adjacency measure. Moreover, by Campbell's theorem \citep{baddeleySpatialPointProcesses2007}, these coefficients are unbiased estimates of the coefficients of the true intensity: $\mathbb{E}[\Ybb(\phi^b)] = \Lambdabb(\phi^b)$.

\ptitle{Global Subspace Estimation. } 
The estimation of the subspace U follows closely the UASE \cite{jonesMultilayerRandomDot2021} strategy. The empirical coefficients are first arranged into a unfolded matrix 
\begin{align*}
    \mathbf{X} = [\Ybb(\phi^1)^T \| \Ybb(\phi^2)^T \| \cdots \| \Ybb(\phi^B)^T] \in \mathbb{R}^{N \times NB}
\end{align*} whose rows represent each node's relational behaviors over time. Despite their high dimensionality, these behaviors typically exhibit low-dimensional structure due to two alignment factors: \emph{cross-sectional alignment} (often reflecting community structure) and \emph{longitudinal alignment} (reflecting structure in nodes' activity patterns). For example, in a social network, nodes may interact with the same communities but at different times, placing them in related but distinct behavioral subspaces. To capture these dominant modes of variation, we apply Truncated Singular Value Decomposition (TSVD) to matrix $\mX$, extracting the $D$ singular vectors corresponding to the largest singular values into a matrix $\mUhat\in \R^{N\times D}$. As a note, this step may be viewed as a partial Tucker decomposition of the 3-mode tensor $\{\Ybb_{uv}(\phi^b)\}$, where the first two modes are the node indices and the third mode is the basis index, namely a SVD of the mode-2 unfolding of the tensor \citep{rabanserIntroductionTensorDecompositions2017}.
Under suitable assumptions, this subspace estimation procedure is consistent, as formalized in the following theorem. 



\begin{theorem}[Subspace Estimation Consistency \label{thm:subspace-estimation}]
    Suppose that $\Ybb\sim COSIP(\rmU, \Sbb)$ and that there exists a fixed matrix-function $\mathbf{R}(t)=\sum_{b=1}^{B} \mC^{b}\phi^b(t) \in \mathbb{R}^{D \times D}$, and a sparsity factor $\rho_N \leq 1$ satisfying $N\rho_N = \omega(\log^3(N))$, such that $\mathbf{S}(t) := \rho_N \mathbf{R}(t)$. 
    In addition, suppose that
    \begin{enumerate}
        \item The matrix $\+\Delta = \sum_{b=1}^B (\+C^b)^\top \+C^b$ has full rank.
        \item The subspace matrix $\+U$ satisfies the incoherence condition
        \[
            \|\mU\|_{2,\infty} = O\Bigl(\sqrt{\tfrac{\log(N)}{N\rho_N}}\Bigr).
        \]
    \end{enumerate}
    Then, 
    there exists an orthogonal matrix $\mathbf{Q}$ such that 
            \begin{align}
                \|\mUhat\mathbf{Q} - \rmU\|_2 = \mathcal{O}_{\mathbb{P}}\!\left(\frac{1}{\sqrt{N\rho_N}}\right)
            \end{align}
\end{theorem}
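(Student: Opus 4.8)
The plan is to view $\mX$ as a signal-plus-noise matrix and to control its leading $D$-dimensional left singular subspace by a Wedin $\sin\Theta$ argument. Write $\mP := \E[\mX]$ and $\mE := \mX - \mP$. Since the empirical coefficients are unbiased, each block of $\mP$ is $\E[\Ybb(\phi^b)] = \Lambdabb(\phi^b)$, and orthonormality of the basis gives $\Sbb(\phi^b) = \rho_n \mC^b$, so that $\Lambdabb(\phi^b) = n\rho_n\,\rmU\mC^b\rmU^\top$ (the intensity scales with the network size $n$). A direct computation then yields
\[
\mP\mP^\top = (n\rho_n)^2\,\rmU\,\+{\Delta}\,\rmU^\top, \qquad \+{\Delta} = \sum_{b=1}^B (\mC^b)^\top\mC^b,
\]
so that Assumption~\ref{assumption:full-rank} guarantees $\mP$ has rank exactly $D$, its left singular subspace equals $\mathrm{col}(\rmU)$, and its smallest nonzero singular value is $\sigma_D(\mP) = n\rho_n\sqrt{\lambda_{\min}(\+{\Delta})} \asymp n\rho_n$. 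As $\mUhat$ is formed from the top $D$ left singular vectors of $\mX = \mP + \mE$, the theorem reduces to a spectral perturbation estimate.

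Second, I would apply Wedin's theorem. Because $\mP$ has rank $D$ we have $\sigma_{D+1}(\mP) = 0$, so the relevant gap is $\sigma_D(\mP)$ and
\[
\norm{\sin\Theta(\mUhat,\rmU)}_2 \;\lesssim\; \frac{\norm{\mE}_2}{\sigma_D(\mP) - \norm{\mE}_2}.
\]
Once the noise bound $\norm{\mE}_2 = \mathcal{O}_{\mathbb{P}}(\sqrt{n\rho_n})$ is established, the condition $n\rho_n = \omega(\log^3 n)$ forces $\norm{\mE}_2 = o(n\rho_n)$, the denominator is $\asymp n\rho_n$, and hence $\norm{\sin\Theta(\mUhat,\rmU)}_2 = \mathcal{O}_{\mathbb{P}}(1/\sqrt{n\rho_n})$. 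Taking $\mQ$ to be the orthogonal factor in the polar decomposition of $\mUhat^\top\rmU$, the standard inequality $\norm{\mUhat\mQ - \rmU}_2 \le \sqrt{2}\,\norm{\sin\Theta(\mUhat,\rmU)}_2$ converts this into the claimed bound.

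The crux of the argument is the spectral-norm bound $\norm{\mE}_2 = \mathcal{O}_{\mathbb{P}}(\sqrt{n\rho_n})$, which I would obtain by matrix concentration. Decompose $\mE = \sum_{u,v} \mE_{(u,v)}$ as a sum over node pairs of independent, mean-zero random matrices — the summand for pair $(u,v)$ being supported on a single row and the $B$ columns indexed by that pair — and apply a matrix Bernstein (or Freedman) inequality. The required inputs are the variance proxy $\max\bigl(\norm{\E[\mE\mE^\top]}_2,\ \norm{\E[\mE^\top\mE]}_2\bigr)$, which by Campbell's theorem equals a weighted sum of the Poisson rates $\int (\phi^b)^2\,\mLambda_{uv}$, together with a sub-exponential tail control on each summand coming from the Poisson moments. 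Here Assumption~\ref{assumption:subspace-incoherence} is indispensable: incoherence of $\rmU$ forces the per-edge intensities $\mLambda_{uv}(t)$ to be uniformly of order $\rho_n$, so the variance proxy is $O(n\rho_n)$ rather than being inflated by a few high-degree rows. A vanilla Bernstein bound then gives $\norm{\mE}_2 = \mathcal{O}_{\mathbb{P}}(\sqrt{n\rho_n\,\log n})$, and the spurious logarithmic factor can be removed using a sharper inhomogeneous random-matrix inequality of Bandeira–van Handel type, yielding $\mathcal{O}_{\mathbb{P}}(\sqrt{n\rho_n})$; in either case the sparsity condition ensures the noise is of strictly smaller order than the signal. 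This concentration step, and in particular marshalling the incoherence and Poisson-moment bounds to pin down the variance proxy, is where I expect the main technical difficulty to lie.
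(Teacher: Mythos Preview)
Your proposal is correct and follows essentially the same route as the paper: identify $\E\mX = n\rho_n\,\rmU[\mC^1\|\cdots\|\mC^B]\rmU^\top$ so that Assumption~\ref{assumption:full-rank} gives $\sigma_D(\E\mX)\asymp n\rho_n$, apply Wedin's $\sin\Theta$ theorem, and control $\|\mX-\E\mX\|_2=O_\P(\sqrt{n\rho_n})$ by Bandeira--van Handel matrix concentration with the incoherence assumption supplying the variance bound. The only execution details the paper adds that you leave implicit are (i) a scalar Bernstein inequality for the Poisson integrals $\Ybb_{uv}(\phi^b)$, proved via a Bernoulli approximation, to get the uniform entry bounds required by Bandeira--van Handel, and (ii) passing to the symmetric dilation of the rectangular noise matrix before invoking that result; also note that under the stated Assumption~\ref{assumption:subspace-incoherence} the per-edge intensities are $O(\log N)$ rather than $O(\rho_n)$, a discrepancy absorbed by the condition $n\rho_n=\omega(\log^3 n)$.
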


A proof of Theorem~\ref{thm:subspace-estimation}
is provided in the appendix. 

\subsection{Second stage: Denoising through statistical thresholding}
\label{sec:denoising}
\idea{ 
\begin{definition}[Empirical Affinity Measure and Coefficients]
    Given a subspace matrix $\mUhat \in \mathbb{R}^{N \times D}$:
    \begin{itemize}
        \item The \textbf{empirical affinity measure} is:
        $\hat{\Sbb}(B) = \mUhat^T \Ybb(B) \mUhat \in \mathbb{R}^{D \times D}$ for $B \in \mathcal{B}(\mathcal{T})$
        \item The \textbf{empirical affinity coefficients} are:
        $\hat{\Sbb}(\phi^b) = \mUhat^T \Ybb(\phi^b) \mUhat \in \mathbb{R}^{D \times D}$ for $b \in [B]$
    \end{itemize}
\end{definition}

\idea{
\begin{proposition}
    \begin{align*}
        \mUhat^TU \to I_D
    \end{align*}
\end{proposition}
}

\begin{property}[Statistical Properties of Empirical Affinity Coefficients \cite{kolaczykEstimationIntensitiesBurstPoisson1996}]
The entries of $\hat{\Sbb}(\phi^b)$ satisfy:
\begin{enumerate}
    \item \textbf{Expectation}: $\mathbb{E}[\hat{\Sbb}_{pq}(\phi^b)] = \sum_{u,v} \mUhat_{up} \mUhat_{vq} \mLambda_{uv}(\phi^b)$.
    \item \textbf{Variance}: $\mathrm{Var}[\hat{\Sbb}_{pq}(\phi^b)] = \sum_{u,v} \mUhat_{up}^2 \mUhat_{vq}^2 \mLambda_{uv}((\phi^b)^2)$.
\end{enumerate}
\end{property}

These properties enable testing for significant network structure changes using the z-score statistic: 
\begin{equation}
    \mZ_{pq}(\phi^b) = \frac{\hat{\Sbb}_{pq}(\phi^b)}{\sqrt{\mathrm{\tilde{Var}}[\hat{\Sbb}_{pq}(\phi^b)]}}
    \label{eq:Zscore}
\end{equation}
where $\tilde{\mathrm{Var}}[\hat{\Sbb}_{pq}(\phi^b)] = \sum_{u,v} \mUhat_{up}^2 \mUhat_{vq}^2 \mY_{uv}((\phi^b)^2)$ is a sample-based variance estimate.
}

The first stage of ANIE outputs an estimated subspace matrix $\widehat{\rmU} \in \mathbb{R}^{N \times D}$, which encodes the cross-sectional structure by representing each node in terms of its projection onto the dominant latent factors of node behavior. Based on this, it is natural to consider that each pair $p,q$ of these latent factors will be subject to change over time. This change can be quantified directly by combining the coefficients of all the node pairs and weighing them by their respective nodes' affinity with the latent factors, which we do here:

\begin{definition}[Empirical affinity coefficients]\label{def:empirical_affinity_coefficients}
    The \textbf{empirical affinity coefficients} are defined as the collection of $D\times D$ matrices
    \begin{equation}
        \hat{\Sbb}(\phi^b) = \mUhat^T \Ybb(\phi^b) \mUhat \in \mathbb{R}^{D \times D}, \quad \forall b \in [B].
        \label{eq:empirical_affinity}
    \end{equation}    
\end{definition}
The empirical affinity coefficients play a central role in our approach and offer several advantages.  
First, they have well-defined statistical properties: their expectation is $\mathbb{E}[\hat{\Sbb}_{pq}(\phi^b)] = \sum_{u,v} \mUhat_{up} \mUhat_{vq} \mLambda_{uv}(\phi^b)$, and their variance is $\mathrm{Var}[\hat{\Sbb}_{pq}(\phi^b)] = \sum_{u,v} \mUhat_{up}^2 \mUhat_{vq}^2 \mLambda_{uv}((\phi^b)^2)$.  
These results follow directly from the distributional properties of point process projections~\cite{kolaczykEstimationIntensitiesBurstPoisson1996}.  Moreover, when using Haar wavelet functions $\phi^b = \psi_{jk}$, they have a straightforward interpretation: each $\hat{\Sbb}_{pq}(\psi_{jk})$ captures changes in interaction intensity between latent factors $p$ and $q$ at specific scales and locations.  
Large values indicate potential structural changes over the support of $\psi_{jk}$.  
For instance, with the Haar wavelet, positive (respectively negative) coefficients correspond to decreasing (respectively increasing) interaction affinity between factors $p$ and $q$ over the interval $\Ical_{j,k}$.  
Extreme values reflect strong structural changes between latent factors. Finally, due to the fact that they borrow strength across node pairs, the next theorem shows that they are asymptotically normal under suitable conditions, enabling statistical testing.





\begin{theorem}[Asymptotic normality of the empirical affinity coefficients\label{thm:asymptotic-normality}]
    Suppose that there exists sequences $\alpha_N, \beta_N, \mu_N$ such that for all $u, v \in [N]$, $p \in [D]$ and $t \in \mathcal T$,
    \begin{equation*}
        0 < \alpha_N \leq \+\Lambda_{uv}(t) \leq \beta_N \qquad \text{ and } \qquad \hat{\+U}_{up}^2 \leq \frac{\mu_N}{N}
    \end{equation*}
    which satisfy
    \begin{equation*}
        \frac{\mu_N^3}{N}\left(\frac{\beta_N}{\alpha_N}\right)^{3/2} \to 0 \qquad \text{ as $N \to \infty$}.
    \end{equation*}

    Then, the standardized version of the empirical affinity coefficients $\hat{\Sbb}_{pq}(\phi^b)$ defined in \ref{def:empirical_affinity_coefficients} converge to a standard 
    normal distribution as $N\to\infty$. More specifically:
    \begin{equation}
        \label{eq:asymptotic-normality}
        \frac{
            \hat{\Sbb}_{pq}(\phi^b) - \E[\hat{\Sbb}_{pq}(\phi^b)]
                        }{\sqrt{\mathrm{Var}[\hat{\Sbb}_{pq}(\phi^b)]}} \xrightarrow{d} \mathcal{N}(0,1), \qquad \text{as $N \to \infty$.}
    \end{equation}
        
\end{theorem}
The proof of Theorem \label{thm:asymptotic-normality} uses the Lyapunov Central Limit Theorem applied to the family of independent variables $\{\mUhat_{up}\mUhat_{vq}\Ybb(\phi^b)\}_{u,v}$ for a given $b$, and is included in the appendix.

\ptitle{Multiple statistical testing for change in the network structure}

As a result of Theorem \ref{thm:asymptotic-normality}, the task of identifying changes in the network structure can be formulated as a multiple statistical testing problem, where the null hypotheses are that the latent factors $p$ and $q$ are not significantly associated with the wavelet functions $\phi^b$: $\Hcal_{p,q}^{b} = \E[\hat{\Sbb}_{pq}(\phi^b)] = 0$.
To carry out these tests, we may define the following Z-scores, by replacing the expectation by 0, and the variance by its empirical estimate:
\begin{equation}
    \mZ_{pq}(\phi^b) = \frac{\hat{\Sbb}_{pq}(\phi^b)}{\sqrt{\tilde{\mathrm{Var}}[\hat{\Sbb}_{pq}(\phi^b)]}}, \quad \text{where} \quad \tilde{\mathrm{Var}}[\hat{\Sbb}_{pq}(\phi^b)] = \sum_{u,v} \mUhat_{up}^2\, \mUhat_{vq}^2\, \hat{\Ybb}_{uv}\Bigl((\phi^b)^2\Bigr).
    \label{eq:Zscore}
\end{equation}

Under the null hypothesis that $\E[\hat{\Sbb}_{pq}(\phi^b)] = 0$ (indicating no correlation with $\phi^b$), these Z-scores follow approximately a standard normal distribution $\mathcal{N}(0,1)$ for large $N$. This is particularly relevant when using wavelets $\phi^b=\psi_{jk}$, as the null hypothesis corresponds to a locally constant intensity between a pair of factors, $p$ and $q$, on the support of the wavelet. To determine which coefficients are statistically significant, we compare $|\mZ_{pq}(\phi^b)|$ with a threshold. Since we conduct $B \times D \times D$ simultaneous tests (one for each coefficient), we must account for multiple comparisons. We control the False Discovery Rate (FDR) using the Benjamini-Hochberg procedure \cite{benjaminiControllingFalseDiscovery1995} at a significance level $\alpha$ (typically 0.05), resulting in a binary significance mask $M_{pq}^b \in \{0,1\}$. The final denoised intensity estimate is then constructed using only the coefficients determined to be statistically significant. We note that there exist more approaches for thresholding wavelet coefficients \cite{kolaczykWaveletShrinkageEstimation1999,donohoIdealSpatialAdaptation1994,talebMultiresolutionAnalysisPoint2021} which we didn't explore in this work but could improve the accuracy of the thresholding stage.

\subsection{Parameter Selection and Computational Efficiency}
The rank $D$ can be determined by examining the scree plot of singular values of the matrix $\mX$. In turn, the choice of significance level $\alpha$ reflects how concervative/agressive the thresholding should be. A threshold $\alpha=0.0$ will lead to a constant signal, since all the detail coefficients will be classified as noise. Conversely, a threshold $\alpha=1.0$ will classify all the coefficients as signal, leading to a noisy estimate. Typically, $\alpha$ is set to $0.05$, as is common in multiple testing scenarios. However, the choice of $\alpha$ can be adjusted based on the specific application and desired level of significance. Our method can be made time and memory efficient by leveraging sparsity in three key ways: (1) coefficients $\Ybb(\phi^b)$ naturally inherit the sparsity of the original adjacency measure $\Ybb$, (2) SciPy's sparse SVD implementation can be employed compute dominant singular vectors without constructing dense matrices, and (3) thresholding operates only on the compact $D\times D$ affinity matrices $\hat{\Sbb}(\phi^b)$ rather than the full $N\times N$ matrices $\Ybb(\phi^b)$. A timed experiment reporting computation times for varying numbers of nodes is provided in the appendix. All our experiments were run on a MacBook Air with a M1 chip and 8GB of RAM.


\note{
\subsection{Link with Tensor Decomposition}
We highlight a connection between our method and a hybrid Tucker/Karhunen-Loève decomposition.

We can interpret our method as a type of continuous Tucker decomposition with measure-valued components. In this perspective, the matrix of measures $B \mapsto \mY_{uv}(B)$ for Borel sets $B$ admits a continuous Tucker/Karhunen-Loève expansion:
\begin{align}
\mY_{uv}(B) = \sum_{a,b,c} G_{a,b,c} U_{u,a}V_{v,b} \eta_c(B)
\end{align}
where $G_{a,b,c}$ is the core tensor, $U_{u,a}$ and $V_{v,b}$ are discrete factor matrices, and $\eta_c(B)$ are measure-valued "singular vectors."

When we project onto a finite basis $\{\phi^b\}_{b=1}^B$, we effectively discretize this continuous decomposition:
\begin{align}
\mY_{uv}(\phi^b) = \int \phi^b(t) d\mY_{uv}(t) = \sum_{a,b',c} G_{a,b',c} U_{u,a}V_{v,b'} \int \phi^b(t) d\eta_c(t)
\end{align}

This projection yields a three-way tensor $\{\mY_{u,v}(\phi^b)\}_{u,v,b}$. Applying a standard Tucker decomposition to this tensor results in:
\begin{align}
\mY_{uv}(\phi^b) = \sum_{a,b',c} \tilde{G}_{a,b',c} \tilde{U}_{u,a}\tilde{V}_{v,b'} \tilde{W}_{b,c}
\end{align}

We can establish that $\tilde{W}_{b,c}$ will be equal to the projection $\int \phi^b(t) d\eta_c(t)$ up to an orthogonal transformation.
}

\idea{

\subsection{Subspace Estimation in the basis domain}
\begin{assumption}[Basis Expansion of the Intensity]
We assume there exists a finite orthonormal basis $\{\phi^1,\ldots,\phi^K\}$ such that the affinity density has the expansion
$\mS(t)= \sum_{k=1}^K \mathbf{A}^k\,\phi^k(t),$
where
\[
\mathbf{A}^k = \Sbb(\phi^k) = \int_{\mathcal{T}} \phi^k(t)\, \mS(t)\,dt
\]
denote the coefficients of the affinity in the basis $\{\phi^1,\ldots,\phi^K\}$.
By linearity, the intensity function $\Lambda$ admits a similar decomposition 
$\mLambda(t)= \sum_{k=1}^K \mathbf{W}^k\,\phi^k(t),$
with
\[
\mathbf{W}^k = \Lambdabb(\phi^k) = \int_{\mathcal{T}} \phi^k(t)\, \mLambda(t)\,dt = \mU\, \mathbf{A}^k\, \mU^T,
\]
which are the coefficients of the intensity in the same basis.
\end{assumption}
The orthonormality allows us to estimate these coefficients direclty by computing projections of the data on each function: $\hat{\mW}^k = \Ybb(\phi^k) = \int_{\mathcal{T}} \phi^k(t) d\Ybb(t) \in \mathbb{R}^{N \times N}, \quad k=1,\ldots,K$ where for each node pair $(u,v)$, $\hat{\mW}^k_{uv} = \sum_{\tau\in\Ecal_{uv}} \phi^k(\tau)$.

We then compute the rank-$D$ truncated singular value decomposition of the matrix $\hat{\mW} = [\hat{\mW}^1 \; \hat{\mW}^2 \; \cdots \; \hat{\mW}^K]$ which we denote as $\mUhat\mathbf{\Sigma}\hat{\mathbf{V}}^\top$. The subspace estimator $\mUhat$ corresponds to the $D$ dominant left singular vectors.

Further, the coefficients $\hat{\mA}^k$ are estimated as $\hat{\mA}^k = \mUhat^\top \hat{\mW}^k \mUhat$ for $k=1,\ldots,K$, leading to the following affinity estimate $\Sbb(t)=\sum_{k=1}^K (\mUhat^k)^{\top} \mW^k \mUhat^k \phi^k(t)$.

\begin{theorem}[Subspace Estimation Consistency \label{thm:subspace-estimation}]
    Under the model $\mLambda(t) = n\mU\mS(t)\mU^\top$ where $\mS(t) = \sum_{k=1}^K \mA^k \phi^k(t)$, and assuming:
    \begin{enumerate}
        \item There exists a fixed matrix-process $\mathbf{R}(t) \in \mathbb{R}^{D \times D}$ and a sparsity factor $\rho_n \leq 1$ satisfying $n\rho_n = \omega(\log^3(n))$, such that $\mS(t) := \rho_n \mathbf{R}(t)$
        \item The eigenvectors are delocalized/incoherent, i.e., they satisfy $\|\mU\|_{2,\infty} = O(n^{-1/2})$
    \end{enumerate}
    
    Then:
    \begin{enumerate}
        \item There exists an orthogonal matrix $\mathbf{Q}$ such that 
            \begin{align}
                \|\mUhat\mathbf{Q} - \mU\|_2 = \mathcal{O}_{\mathbb{P}}\!\left(\frac{1}{\sqrt{n\rho_n}}\right)
            \end{align}
        \item For the estimated affinity function:
            \begin{align}
                \sup_{t\in[0,1]}\|\hat{\mS}(t) - \mS(t)\|_2 = \mathcal{O}_{\mathbb{P}}\!\left(\sqrt{\frac{\rho_n}{n}}\right)
            \end{align}
    \end{enumerate}
\end{theorem}

}
\idea{
\subsection{Statistical Thresholding of the Empirical Affinity Coefficients}
This section gives justifies the use of multiple thresholding to select to select significant coefficients by showing that the z-score introduced in \ref{sec:method} are asymptotically normal.
In this part we suppose that we have estimated the subspace matrix $\mUhat$ and we consider it fixed, deterministic at this stage.

We recall the definition of the empirical wavelet coefficients from \ref{eq:empirical_affinity}:
\begin{align*}
    \mShat_{pq}(\psi_{j,k}) = \sum_{u,v \in [N]^2} \mUhat_{u,p} \mUhat_{v,q} \Ybb_{uv}(\psi_{j,k})
\end{align*}

\begin{theorem}[Asymptotic Normality of the empirical affinity coefficients\label{thm:asymptotic-normality}]
    Based on the empirical affinity coefficients $\mShat_{pq}(\phi^b) = \sum_{u,v\in[N]^2} \mU_{u,p} \mU_{v,q} \Ybb_{uv}(\phi^b)$, we define the standardized affinity as:
        
        \begin{align*}
            \mT_{pq}(\phi^b) &= \frac{
                \mShat_{pq}(\phi^b) - \E[\mShat_{pq}(\phi^b)]
                }{\sqrt{\mathrm{Var}[\mShat_{pq}(\phi^b)]}}
        \end{align*}
        
        Under appropriate regularity conditions on the intensity function $\Lambdabb$, as $N \to \infty$, these standardized affinity converges in distribution to a standard normal:
        
        \begin{align*}
            \mT_{pq}(\phi^b) \xrightarrow{d} \mathcal{N}(0,1)
        \end{align*}
\end{theorem}

        Since the variance in the denominator is unknown, we estimate it empirically as $\mathrm{\tilde{Var}}[\mShat_{pq}(\phi^b)] = \sum_{u,v\in[N]^2} \mUhat_{u,p}^2 \mUhat_{v,q}^2 \Ybb_{uv}\parentheses{(\phi^b)^2}$. Under the null hypothesis that $\E[\mShat_{pq}(\phi^b)] = 0$, we obtain the following Z-score statistic: 
        
        \begin{align*}
            \mZ_{pq}(\phi^b) &= \frac{
                \sum_{u,v\in[N]^2} \mUhat_{u,p} \mUhat_{v,q} \Ybb_{uv}(\phi^b)
            }{
                \sqrt{\sum_{u,v\in[N]^2} \mUhat_{u,p}^2 \mUhat_{v,q}^2 \Ybb_{uv}\parentheses{(\phi^b)^2}}
            }
        \end{align*}
        
        This asymptotic normality allows us to use conventional statistical thresholding procedures (e.g., at significance level $\alpha = 0.05$) to identify meaningful structural changes in the network while filtering out random noise.
}




\section{Experiments \label{sec:experiments}}
The proposed ANIE method is evaluated on two tasks.  First, we generate synthetic Erdős-Renyi (ER) and Stochastic Block Model (SBM) datasets, and measure the performance of our method in estimating a known network intensity from an observed dynamic network. Second, in order to demonstrate the practical utility of our method, we apply it to the task of detecting change points in a real-world dataset of message interactions, and compare our method with two existing methods: Laplacian Anomaly Detection (LAD) \citep{huangLaplacianChangePoint2024} and Tensorsplat \citep{koutraTensorSplatSpottingLatent2012}.

\subsection{Intensity Estimation on Synthetic Datasets}

\ptitle{Dataset. }
We test our wavelet-based approach on synthetic datasets designed specifically to test temporal adaptivity. We simulate networks with both Erdős-Rényi (ER) and Stochastic Block Model (SBM) structures, where intensity functions show complex temporal patterns. For ER-blocks, every node pair shares the same intensity, $\mLambda_{uv}(t)=\lambda_{blocks}(t)$, based on the "blocks" test function from \citep{donohoIdealSpatialAdaptation1994}, which features blocks of varying widths. For SBM, we generate a two-community network with piecewise constant intensities: intra-community intensities are significantly higher than inter-community ones, except over an interval where they are equal. This setting tests our method's ability to detect sharp intensity changes.
For the Erdős-Renyi model, we generate various networks with a number of nodes ranging from 50 to 1000. In contrast for the SBM model, we generate networks with 50 to 2500 nodes. 

\newcommand{\Lambdatilde}{\tilde{\mLambda}}

\begin{figure}[t]
    \centering
        \begin{subfigure}{0.32\textwidth}
            \includegraphics[width=\linewidth]{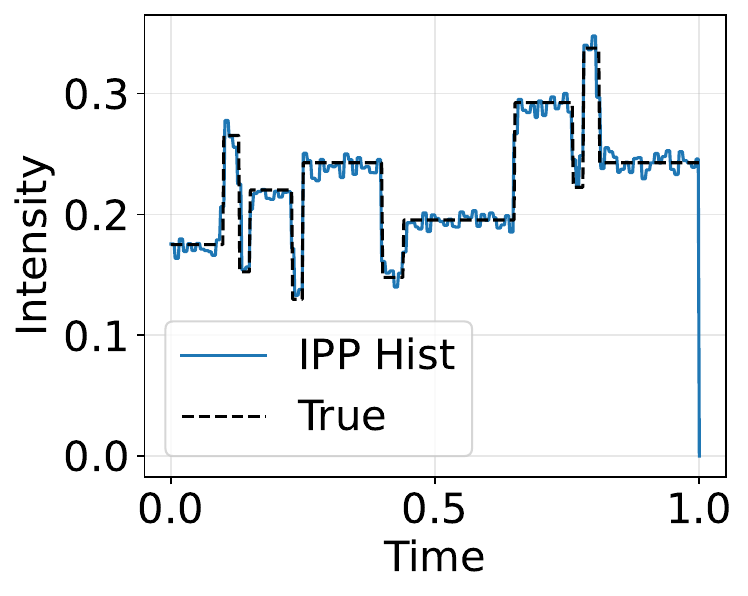}
            \caption{IPP-Hist on ER-blocks}
            \label{fig:er_hist}
        \end{subfigure}
        \begin{subfigure}{0.32\textwidth}
            \includegraphics[width=\linewidth]{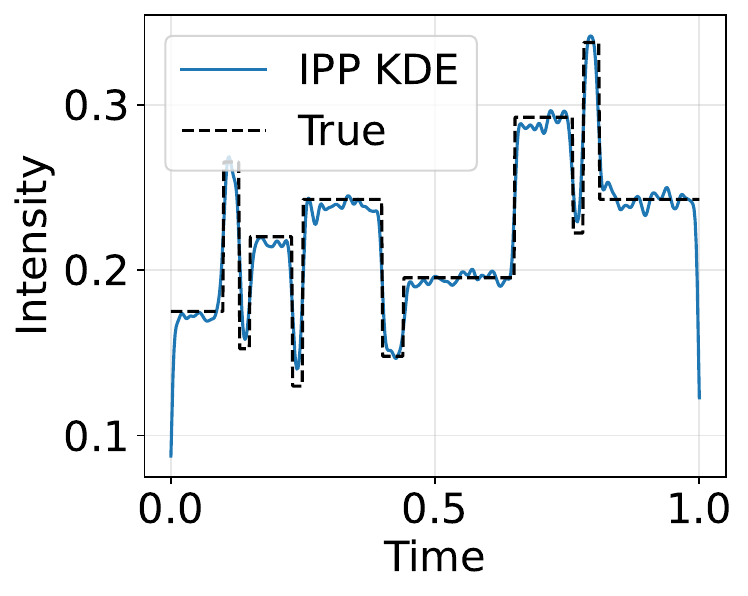}
            \caption{IPP-KDE on ER-blocks}
            \label{fig:er_kde}
        \end{subfigure}
        \begin{subfigure}{0.32\textwidth}
            \includegraphics[width=\linewidth]{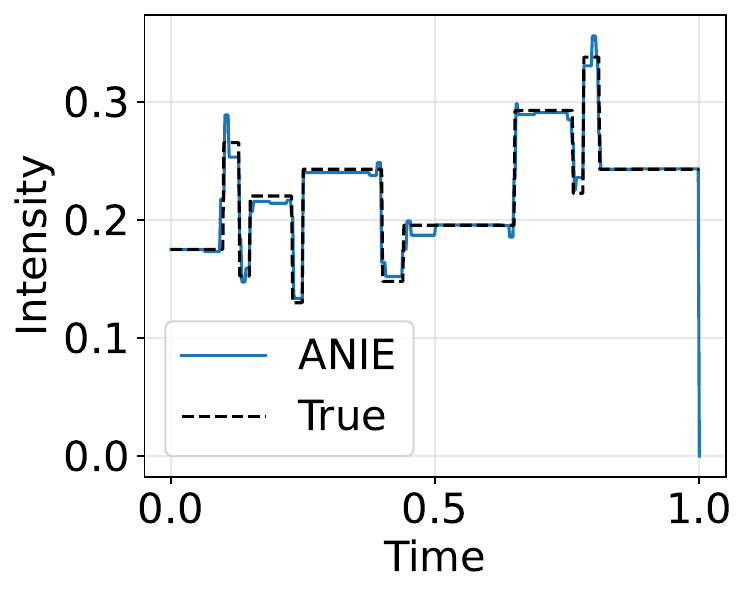}
            \caption{ANIE-Haar on ER-blocks}
            \label{fig:er_anie}
        \end{subfigure} 
        \begin{subfigure}{0.32\textwidth}
            \includegraphics[width=\linewidth]{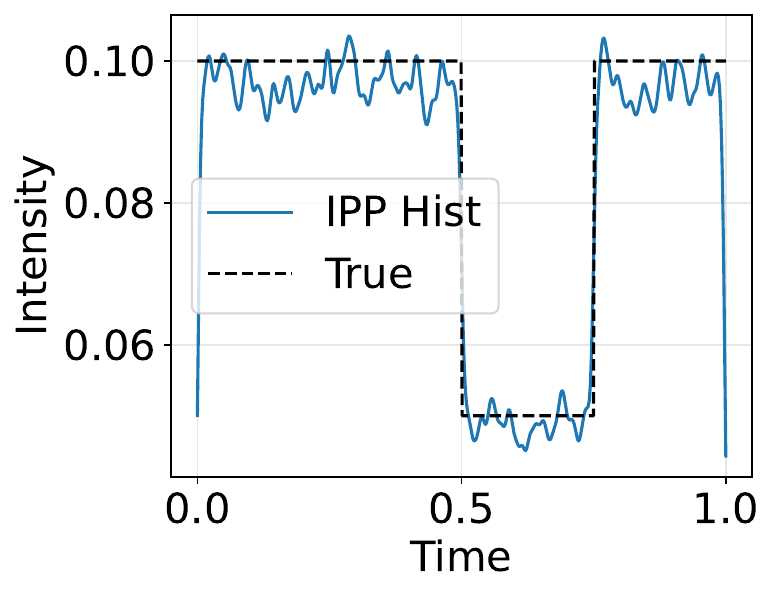}
            \caption{IPP-Hist on SBM}   
            \label{fig:sbm_hist}
        \end{subfigure}
        \begin{subfigure}{0.32\textwidth}
            \includegraphics[width=\linewidth]{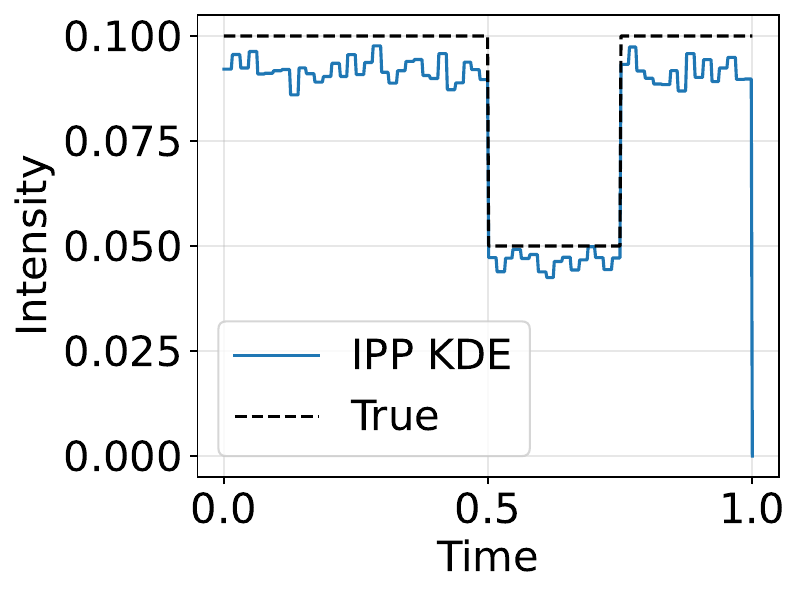}
            \caption{IPP-KDE on SBM}
            \label{fig:sbm_kde}
        \end{subfigure}
        \begin{subfigure}{0.32\textwidth}
            \includegraphics[width=\linewidth]{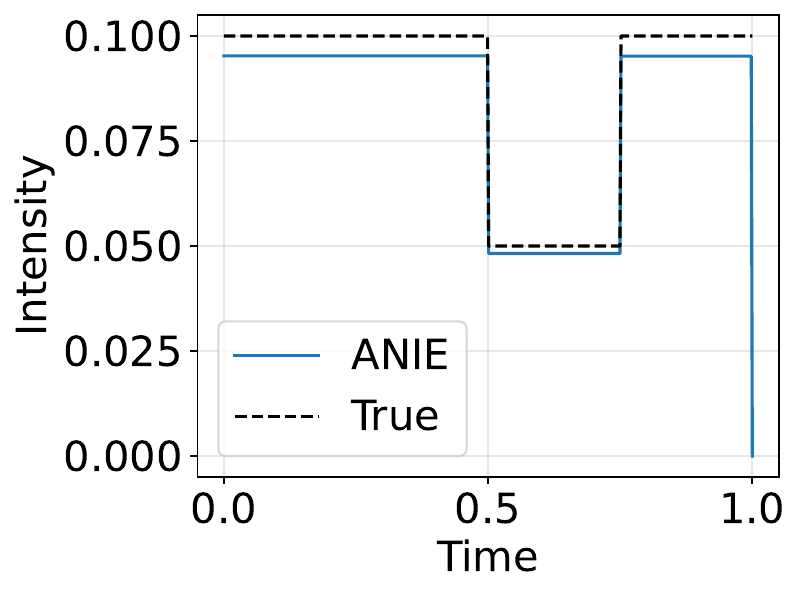}
            \caption{ANIE on SBM}
            \label{fig:sbm_anie}
        \end{subfigure}
        \centering
        \begin{subfigure}{0.33\textwidth}
            \includegraphics[width=\linewidth]{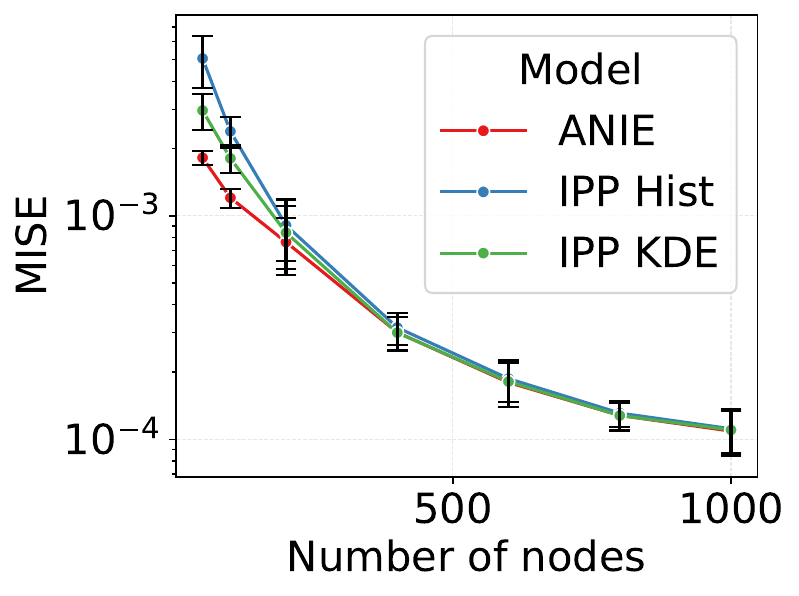}
            \caption{ER-blocks dataset}
        \end{subfigure}
        \begin{subfigure}{0.33\textwidth}
            \includegraphics[width=\linewidth]{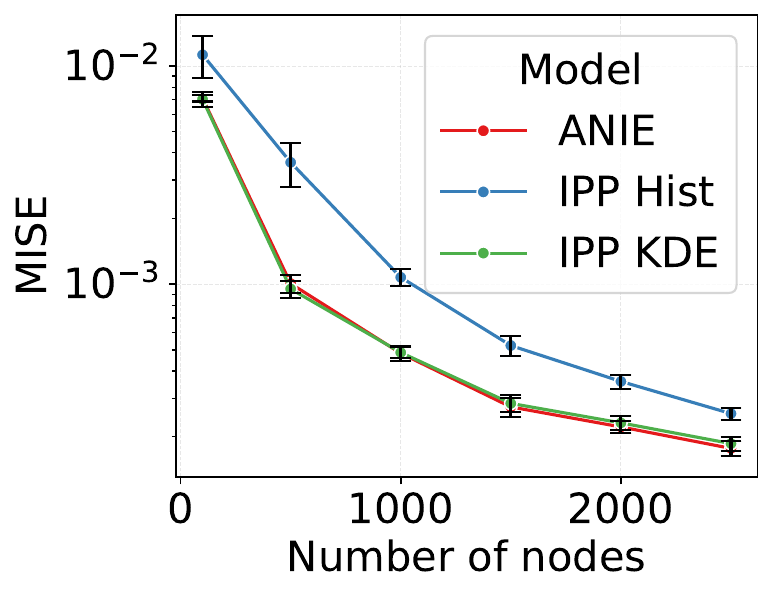}
            \caption{SBM dataset}
        \end{subfigure}
        \caption{Comparison of intensity estimation methods on ER-blocks and SBM datasets. The first two rows show the estimated intensity functions for different methods, while the last row shows the MISE vs. number of nodes for both datasets.\label{fig:er_model_comparison_mise}
        }
\end{figure}

\ptitle{Experimental Setting. }
We compare our ANIE-Haar approach against non-adaptive IPP estimators from \citep{modellIntensityProfileProjection2023}. IPP first constructs a naive intensity estimate $\Lambdatilde$, then applies low-rank denoising $\hat{\mLambda}(t)=\hat{\mathbf{U}} \hat{\mathbf{U}}^T\Lambdatilde(t) \hat{\mathbf{U}} \hat{\mathbf{U}}^T.$
We consider two variants: \textbf{IPP-KDE}, which constructs $\Lambdatilde$ using kernel density estimation, and \textbf{IPP-Hist}, which uses histogram-based estimation. Note that IPP-Hist is equivalent to ANIE with the Haar wavelet without thresholding. We use the Mean Integrated Squared Error (MISE) as our primary metric:  $\text{MISE} = \frac{1}{N^2} \sum_{(u,v) \in [N]^2} \int_{\mathcal{T}} \Bigl|\mLambda_{uv}(t) - \hat{\mLambda}_{uv}(t)\Bigr|^2 dt$. This metric averages the intensity estimation error over node pairs. In our setting we average the error over patches of $N=100$ nodes (i.e. $100^2$ node pairs). We report mean and standard error over 10 runs.

\ptitle{Results. } The experimental results, summarized in Figure \ref{fig:er_model_comparison_mise} demonstrate two essential advantages of \us. A first notable advantage is \emph{multi-scale abilities}. As can be seen in Figure \ref{fig:er_anie} that the proposed approach allows capturing perturbations of the underlying intensity which occur at different temporal resolutions, while staying robust to noise.  In contrast, in order to accurately capture the same perturbations, non-adaptive methods such as IPP-Hist and IPP-RBF pay the price of overfitting to noise, leading to spurious oscillations in the flat regions of the intensity function, as can be seen in \ref{fig:er_kde}, \ref{fig:er_hist}.  A second, related key advantage is \emph{sharp change localization}: the ANIE approach precisely identifies structural change points in the empirical affinity coefficients (Figure \ref{fig:sbm_anie}), while staying robust to noise. In this setting, the intensity function between node pairs is piecewise constant, with abrupt changes in the intensity function. Our method effectively captures these changes, while other methods like IPP-Hist and IPP-RBF require a small bandwidth, and thus overfit to noise, in order to capture the same changes.

\idea{
\subsection{High School Contact Network}

\ptitle{Dataset. }

\begin{figure}[h]
    \begin{subfigure}{0.95\textwidth}
        \centering
        \includegraphics[width=\linewidth]{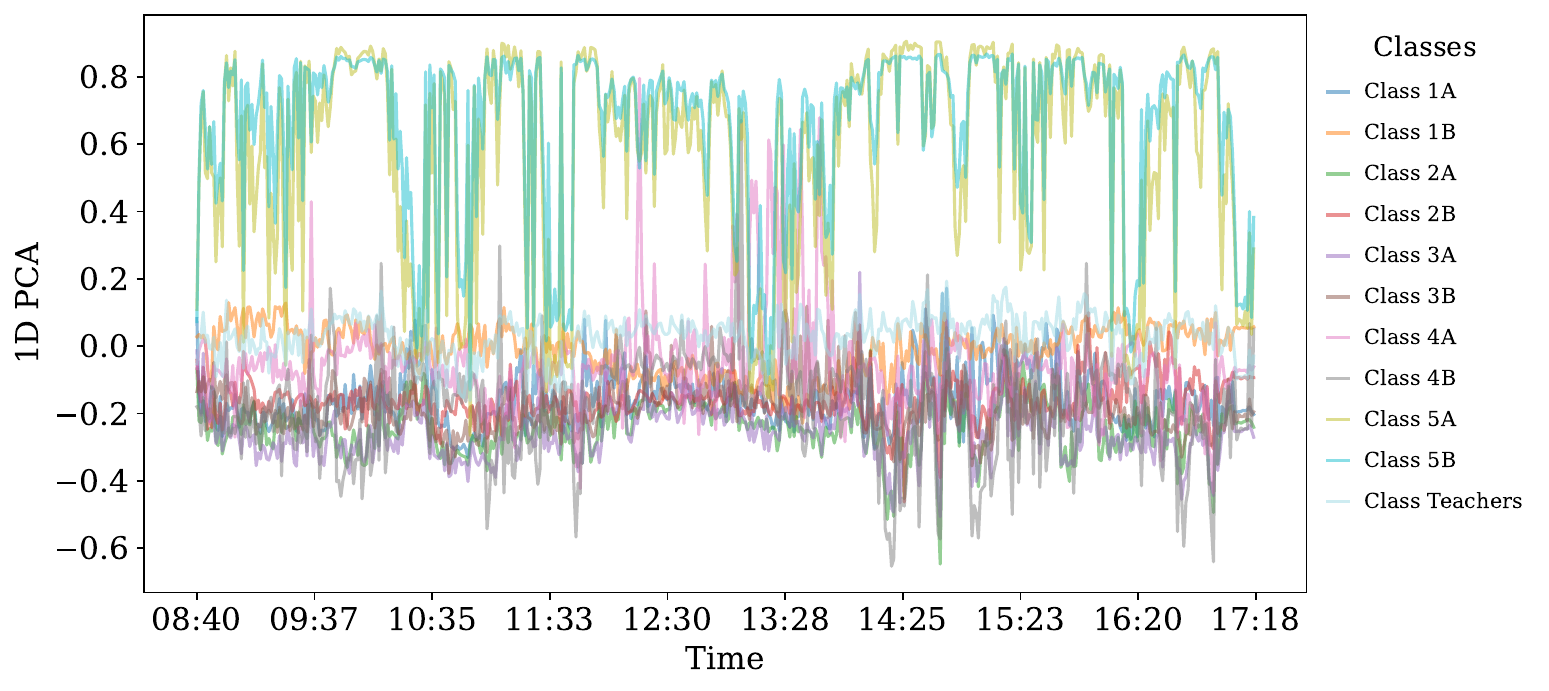}
    \end{subfigure}
    
    \begin{subfigure}{0.95\textwidth}
        \centering
        \includegraphics[width=\linewidth]{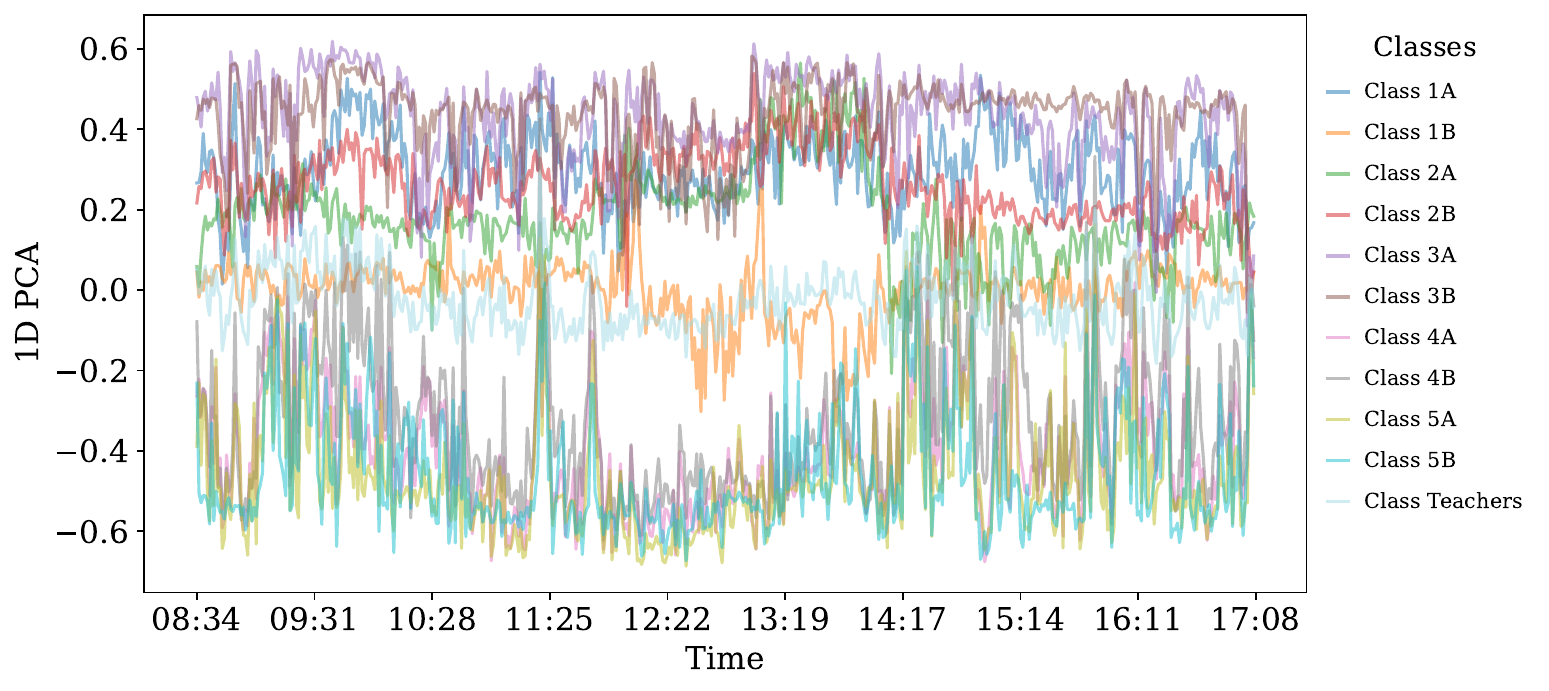}
    \end{subfigure}
    \caption{Embeddings for the primary school dataset. }
\end{figure}
}

\subsection{Case Study: Multi-scale Anomaly Detection on the UCI Messages Dataset}

\begin{figure}[t]
    \centering
    \begin{minipage}[b]{0.48\textwidth}
        \centering

        \begin{subfigure}[b]{\linewidth}

            \hspace{-0.6cm}
            \includegraphics[width=1.08\linewidth]{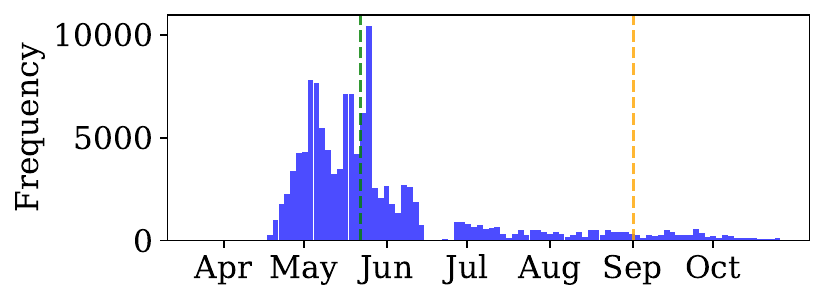}
            \caption{Message count over time}
            \label{fig:message_count}
        \end{subfigure}

        \begin{subfigure}[b]{\linewidth}
            \includegraphics[width=\linewidth]{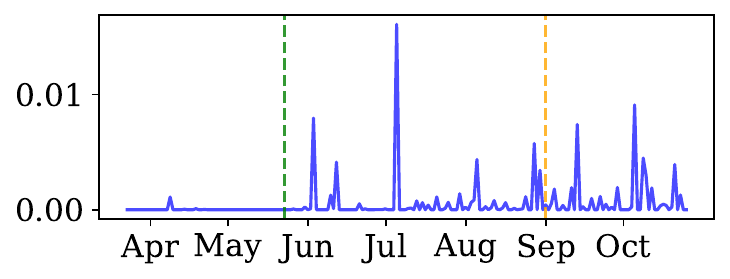}
            \caption{LAD}
            \label{fig:lad_score}
        \end{subfigure}


        \begin{subfigure}[b]{\linewidth}
            \includegraphics[width=\linewidth]{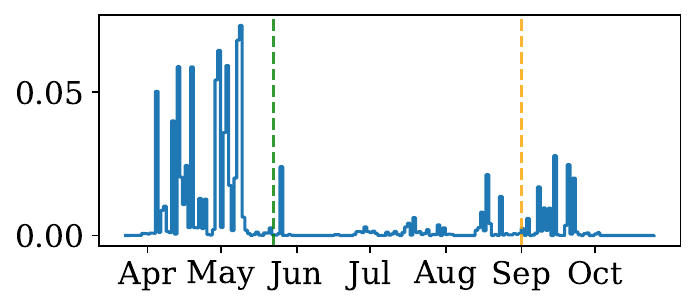}
            \caption{Tensorsplat}
            \label{fig:uci_tensorsplat_cp_change}
        \end{subfigure}
    \end{minipage}%
    \hfill
    \begin{minipage}[t]{0.48\textwidth}
        \centering
        \begin{subfigure}[b]{\linewidth}
            \includegraphics[width=0.8\linewidth]{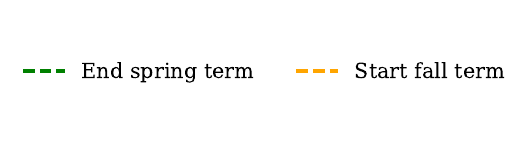}
            \vfill
            \includegraphics[width=\linewidth]{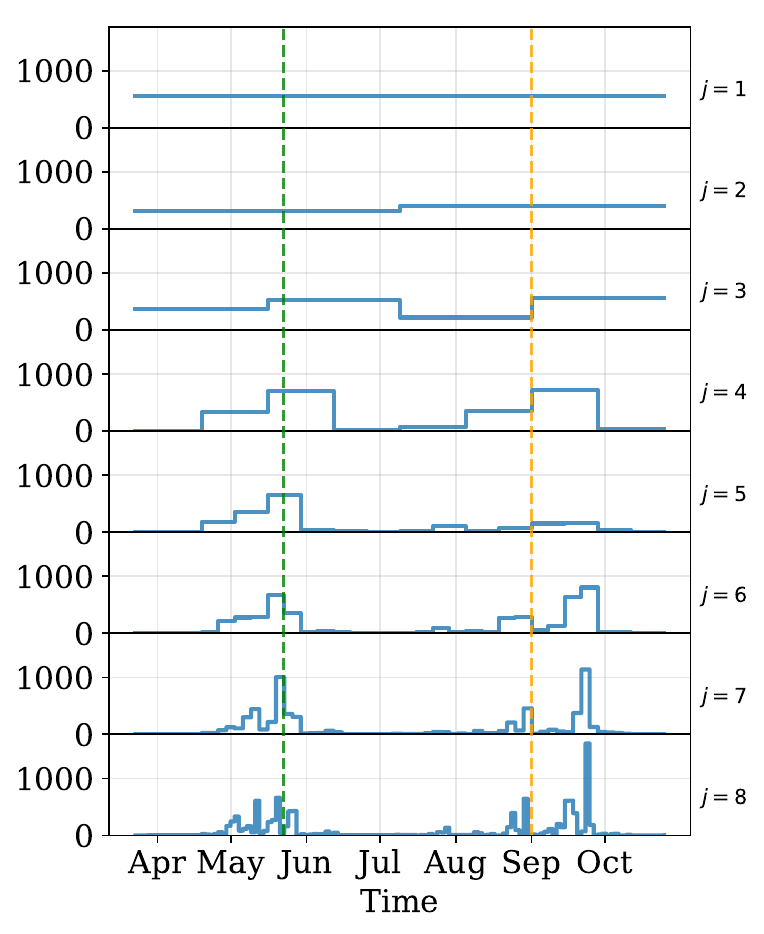}
            \caption{ANIE multiscale anomaly score (Ours)}
            \label{fig:anie_score}
        \end{subfigure}
    \end{minipage}

    \caption{Comparison of anomaly detection methods on the UCI dataset. Left column show respectively (a) message count over time, (b) LAD anomaly score \cite{huangLaplacianChangePoint2024}, and (c) Tensorsplat \cite{koutraTensorSplatSpottingLatent2012}. Right column (d) shows the multi-scale anomaly scores from our ANIE method. The two main events identified in \cite{panzarasaPatternsDynamicsUsers2009a} are highlighted with vertical dashed lines. }
    \label{fig:anomaly_comparison}
\end{figure}

\begin{makechanges}

    \ptitle{Dataset. }
    This experiment evaluates the practical utility of our ANIE method for detecting changepoints in real-world interaction datasets. To this end, we apply ANIE to the UCI Messages dataset, as done in previous work \citep{huangLaplacianChangePoint2024}. This dataset contains 59,835 messages sent among 1,899 users over a period of 196 days between April and October 2004. Each message interaction is represented as a directed edge with a weight corresponding to the number of characters in the message.

\ptitle{Experimental Setting. } We compare our ANIE method against two existing approaches for anomaly detection in temporal networks.
\emph{Laplacian Anomaly Detection} (LAD) \citep{huangLaplacianChangePoint2024} bins the data into temporal snapshots, embeds each graph using its Laplacian eigenvalues, and detects anomalies by monitoring changes in these successive spectral representations.  The \emph{Tensorsplat} \cite{koutraTensorSplatSpottingLatent2012} anomaly score is calculated as $\text{Tensorsplat}(t) = \norm{\mT(t) - \mT(t-1)}_2$ where $\mT(t)$ is the the row $t$ of the third matrix in the PARAFAC decomposition of the fixed-resolution adjacency tensor. In contrast, for a given scale $j$ the ANIE multi-multiscale anomaly score at scale $j$ is defined as the Frobenius norm of the $2^{j}$ empirical affinity : for all $k=0,\ldots,2^j-1,E_j(k) \deltaequal \norm{\Sbbhat(\psi_{j,k})}_F$.
Using piecewise constant interpolation, we then convert this discrete time series into a continuous function $E_j(t)$ defined on the interval $[0,1]$ which we plot on Figure ~\ref{fig:anie_score}.  We use the Haar Wavelet basis in this example. 

\ptitle{Results. }
Figure~\ref{fig:anomaly_comparison} shows anomaly detection results on the UCI Messages dataset.
As can be seen, ANIE successfully identifies major structural changes in the network, notably at the end of the spring term (day 60) and the start of the fall term (day 150). We first validate that this second event is not detectable by simply counting message volume over time. Tensorsplat seems to follow to a great extent the message count over time. This is likely due to the fact that in the PARAFAC decomposition, the third factor indirectly encodes the interaction rate into the latent activity vectors. 
Unlike LAD, which produces a single aggregated anomaly score, ANIE provides multi-resolution scores that distinguish large-scale structural changes from finer scale temporal oscillations. As shown in Figure~\ref{fig:anie_score}, the two main events described in \cite{panzarasaPatternsDynamicsUsers2009a} lead to change observed over different time scales. A possible explanation for the discrepancy between resolutions is that coarse-scale changes (green and blue curves) capture the formation and dissolution of social groups at the end and beginning of the academic year, whereas finer-scale changes (yellow) reflect short-term fluctuations driven by the academic calendar, such as classes, group projects, or other time-limited activities.

\end{makechanges}

\idea{The application of our method to event detection on the UCI Messages dataset demonstrates:
\begin{itemize}
    \item More accurate identification of significant temporal changes in communication patterns
    \item Reduced sensitivity to noise compared to the LAD baseline
    \item Ability to detect events at multiple time scales due to the multi-resolution nature of wavelets
\end{itemize}}
\idea{\subsection{Summary of Findings}
Our experiments across four diverse datasets demonstrate the effectiveness of our wavelet-based thresholding approach for temporal network intensity estimation. The method consistently outperforms baselines in terms of:
\begin{itemize}
    \item Reconstruction accuracy (lower MSE, higher SNR)
    \item Adaptivity to temporal variations at different scales
    \item Robustness to different network structures and temporal patterns
    \item Practical utility in downstream tasks such as embedding and event detection
\end{itemize}

The results highlight the key advantage of our approach: by appropriately thresholding wavelet coefficients, we can adapt to the natural multi-scale structure of temporal networks, capturing important events while filtering out noise.
}

\section{Discussion and Conclusion\label{sec:conclusion}}
Due to the continuous nature of dynamic networks, tools from functional data analysis such as basis expansions combined with low-rank approximations appear to be a natural fit for analyzing dynamic networks. In this work, we have presented \us, a method that uses low-rank approximation to estimate the global structure of the data. It then employs a multi-resolution, wavelet-based approach to test for significant changes in network structure at different resolution levels. In doing so, it addresses the time–frequency trade-offs inherent to fixed-bandwidth and discretized approaches.

The proposed methodology comes with several limitations. First, the testing strategy relies on an estimate of the variance of the empirical affinity coefficients for computing the Z-scores (Equation~\ref{eq:Zscore}). The robustness of the thresholding procedure is therefore highly sensitive to this variance estimation error, particularly at high resolutions. Exploring alternative thresholding strategies thus appears to be a promising direction for future work.
Moreover, in this study, we have focused on the Haar wavelet basis due to its convenient interpretation as an adaptive histogram. However, in applications requiring smoother intensity estimates, alternative bases such as Daubechies wavelets \citep{mallatWaveletTourSignal2009} or B-spline-based approaches like Splinets \citep{podgorskiSplinetsSplinesTaylor2021, liuDyadicDiagonalizationPositive2022} could be employed.
Additionally, our method could be extended by incorporating tensor factorization techniques such as PARAFAC \citep{gauvinDetectingCommunityStructure2014}, applied to the sparse tensor of empirical coefficients. Lastly, an interesting avenue would be to explore a hybrid Tucker–Karhunen–Loève decomposition, building on recent work in PCA for point processes \citep{picardPCAPointProcesses2024}, potentially leading to a deeper theoretical understanding of the opportunities and limitations of functional analysis on dynamic networks.

\begin{ack}
The research leading to these results has received funding from the Special Research Fund (BOF) of Ghent University (BOF20/IBF/117), from the Flemish Government under the ``Onderzoeksprogramma Artificiële Intelligentie (AI) Vlaanderen'' programme, from the FWO (project no. G0F9816N, 3G042220, G073924N). Funded by the European Union (ERC, VIGILIA, 101142229). Views and opinions expressed are however those of the author(s) only and do not necessarily reflect those of the European Union or the European Research Council Executive Agency. Neither the European Union nor the granting authority can be held responsible for them. Modell and Heard acknowledge support from EPSRC NeST Programme
grant EP/X002195/1.

\end{ack}

\bibliographystyle{plainnat}
\bibliography{references}

\clearpage
\appendix
\begin{center}
    \Large{\textbf{Appendix to the paper "Multiresolution Analysis and Statistical Thresholding on Dynamic Networks"}}\\
\end{center}

\section{Full algorithm \label{appendix:algo}}

Algorithm \ref{alg:anie} describes the full pipeline of our proposed Adaptive Network Intensity Estimation (ANIE) method.

\begin{algorithm*}[htbp]
    \caption{Adaptive Network Intensity Estimation\label{alg:anie}}
    \begin{flushleft}
    \textbf{Input:} Dynamic network $\Ybb$, orthonormal basis $\{\phi^b\}_{b=1}^B $ of $\Lcal^2(\mathcal{T})$, rank $D$, significance level $\alpha$
    \end{flushleft}
    \begin{algorithmic}[1]
    \STATE \textbf{Basis decomposition:} Decompose the dynamic network on the functional basis
    \begin{align*}
        \Ybb(\phi^b) = \int_{\mathcal{T}} \phi^b(t) d\Ybb(t) \in \mathbb{R}^{N \times N}, \quad b=1,\ldots,B
    \end{align*}

    \STATE \textbf{Low rank estimation:} Form the concatenated matrix and compute truncated SVD
    \begin{align*}
        \mathbf{X} &= [\Ybb(\phi^1)^T \| \Ybb(\phi^2)^T \| \cdots \| \Ybb(\phi^B )^T] \in \mathbb{R}^{N \times NB}\\
        \mathbf{X} &\approx \mUhat \mSigma\mVhat^T \quad \text{(keeping $D$ largest singular values)}
    \end{align*}

    Calculate the \emph{empirical affinity coefficients}
    \begin{align*}
        \Sbbhat(\phi^b) &= \mUhat^T \Ybb(\phi^b) \mUhat \in \mathbb{R}^{D \times D}
    \end{align*}
    And their associated \emph{sample variance estimates}
    \begin{align*}
        \mathrm{\tilde{Var}}[\Sbbhat_{pq}(\phi^b)] &= \sum\limits_{u,v} \mUhat_{up}^2 \mUhat_{vq}^2 \Ybb_{uv}
        \parentheses{(\phi^b)^2} \in \mathbb{R}^{D \times D}
    \end{align*}

\STATE \textbf{Statistical thresholding:} For each $p,q \in [D]^2$ and $b\in[B]$, compute 
\[
  \text{Z-score } Z_{pq}^b = \frac{\Sbbhat_{pq}(\phi^b)}{\sqrt{\tilde{\Var}[\Sbbhat_{pq}(\phi^b)]}}\,,\quad 
  \text{and associated p-value }
  p_{pq}^b = 2\bigl(1 - \Phi(|Z_{pq}^b|)\bigr)\,, 
\]
where $\Phi$ is the standard normal cumulative density function. Then apply multiple‐testing correction to obtain the corrected p-values $\tilde p_{pq}^b$, and finally the thresholded coefficients using
\[
  T^\alpha\bigl(\Sbbhat_{pq}(\phi^b)\bigr)=
  \begin{cases}
    \Sbbhat_{pq}(\phi^b), & \text{if the coefficient is significant, i.e. }\tilde p_{pq}^b < \alpha,\\
    0,                    & \text{otherwise.}
  \end{cases}
\]

    \STATE \textbf{Reconstruction:} Compute thresholded intensity estimate
    \begin{align*}
        \hat{\mLambda}(t) = \mUhat \left(\sum\limits_{b=1}^B  T^{\alpha}(\Sbbhat(\phi^b)) \phi^b(t)\right) \mUhat^T
    \end{align*}

    \end{algorithmic}
    \begin{flushleft}
    \textbf{Output:} Low-rank subspace $\mUhat$, significance mask $\mathcal{M}_{pq}^b = \ind_{\{\tilde p_{pq}^b < \alpha\}}$, intensity estimate $\hat{\mLambda}(t)$.
    \end{flushleft}
\end{algorithm*}

\section{Proof of theorem 4.1}





Recall that
\begin{align*}
    \mathbf{X} = [\Ybb(\phi^1)^T \| \Ybb(\phi^2)^T \| \cdots \| \Ybb(\phi^B)^T] \in \mathbb{R}^{N \times nN}
\end{align*}
and then by the properties of Poisson processes, we have that
\begin{align*}
    \E\mathbf{X} = [\mathbb{\Lambda}(\phi^1)^T \| \mathbb{\Lambda}(\phi^2)^T \| \cdots \| \mathbb{\Lambda}(\phi^B)^T] \in \mathbb{R}^{N \times nN}
\end{align*}
Observe that
\begin{equation*}
    \mathbb{\Lambda}(\phi^b) = \int_{\*T} \+\Lambda(t)\phi^b(t) dt = N\rho_N\int_{\*T}\+U\+R(t)\+U^\top\phi^b = N\rho_N\+U\left( \int_{\*T}\+R(t)\phi^b(t) dt\right)\+U^\top.
\end{equation*}
In addition, since the basis functions $\phi^1,\ldots,\phi^B$ are orthonormal, we have that
\begin{equation*}
    \int_{\*T} \+R(t)\phi^b dt = \int_{\*T} \left( \sum\limits_{b'=1}^B \+C^{b'}\phi^{b'} \right) dt = \sum\limits_{b'=1}^B \+C^{b'} \left(\int_{\*T}\phi^{b'}(t)\phi^b(t) dt\right) = \+C^b.
\end{equation*}
It follows that $\mathbb{\Lambda}(\phi^b) = N\rho_N\+U\+C^b\+U^\top$ and
\begin{equation*}
    \E\+X = N\rho_N\+U \+C \+U^\top, \qquad \+C = [\+C^1 \| \+C^2 \| \cdots \| \+C^B].
\end{equation*}

Therefore, there exists an orthogonal matrix $\+O_1 \in \R^{D \times D}$ such that the left (orthonormal) singular values corresponding to the non-zero singular values of $\E\+X$, which we denote $\sigma_1 \geq \cdots \geq \sigma_D$, are given by the columns of $\+U\+O_1$.

Let $\hat{\+U} = (\hat u_1,\ldots, \hat u_D)$ be the matrix whose columns contains the left (orthonormal) singular vectors of $\+X$ corresponding to the $D$ largest eigenvalues of $\+X$, which we denote $\hat \sigma_1 \geq \cdots \geq \hat \sigma_D$.

Then, by Wedin's $\sin\Theta$ theorem \citep[Theorem~2.9]{chen2021spectral} we have, providing $\norm{\+X - \E\+X}_2 \leq (1 - 1/\sqrt{2})(\sigma_D - \sigma_{D+1})$ that there exists an orthogonal matrix $\+O_2 \in \R^{D \times D}$ such that 
\begin{equation}
\label{eq:dk}
    \norm{\hat{\+U} - \+U\+O_1\+O_2}_2 \leq \frac{\norm{\+X - \E\+X}_2}{\sigma_D - \sigma_{D+1}}.
\end{equation}

By assumption, the matrix $\+\Delta = \sum\limits_{b=1}^B (\+C^b)^\top \+C^b$ has full rank, and therefore $\sigma_1,\ldots,\sigma_D = \Theta(N\rho_N)$. The matrix $\E\+X$ has rank $D$ and so $\sigma_{D+1} = 0$. Therefore $\sigma_D - \sigma_{D+1} = \Omega(N\rho_N)$.

To complete the proof, it will suffice to show that $\norm{\+X - \E\+X}_2 = \O_\P(\sqrt{N\rho_N})$, after which we can subsequently right-multiply \eqref{eq:dk} by $\+Q := (\+O_1\+O_2)^\top$ to conclude the proof.

To do so, we will prove the following concentration inequality, which we prove in Section~\ref{sec:concentration_proof}.

\begin{lemma}
\label{lem:concentration}
    Let $\mathbb{Y}$ denote the counting measure of an inhomogeneous Poisson process with finite intensity measure $\mathbb{\Lambda}$ on $[0,1)$. Let $\phi \in \*L^2([0,1))$ and let $L$ be a value such that $\phi(t) \leq L$.
    Then
    \begin{equation*}
        \P\br{\abs{\mathbb{Y}(\phi) - \mathbb{\Lambda}(\phi)} > t} \leq 2 \exp\cbr{-\frac{t^2}{2(\mathbb{\Lambda}(\phi) + tL/3)}}.
    \end{equation*}
    In particular, for $t \geq \mathbb{\Lambda}(\phi)$,
    \begin{equation*}
        \P\br{\abs{\mathbb{Y}(\phi) - \mathbb{\Lambda}(\phi)} > t} \leq 2\exp\left(-\frac{3t}{8L}\right)
    \end{equation*}
\end{lemma}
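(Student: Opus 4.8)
The plan is to pass to the (centred) cumulant generating function of $\mathbb{Y}(\phi)$ and finish with a Chernoff argument, i.e. to establish a Bernstein-type bound directly from the exponential moments of the Poisson integral. Writing $\mathbb{Y}(\phi) = \int_{[0,1)}\phi\,d\mathbb{Y} = \sum_i \phi(\tau_i)$ as a linear functional of the Poisson points $\{\tau_i\}$, the natural starting point is the Laplace functional of an inhomogeneous Poisson process (Campbell's theorem): for every $\theta \in \mathbb{R}$,
\[
\E\!\left[e^{\theta\mathbb{Y}(\phi)}\right] = \exp\!\br{\int_{[0,1)}\br{e^{\theta\phi(t)}-1}\,d\mathbb{\Lambda}(t)}.
\]
Since $\mathbb{\Lambda}$ is finite and $\phi$ is bounded, this integral converges; subtracting the mean $\mathbb{\Lambda}(\phi) = \E[\mathbb{Y}(\phi)]$ gives the centred log-MGF
\[
K(\theta) := \log \E\!\left[e^{\theta(\mathbb{Y}(\phi)-\mathbb{\Lambda}(\phi))}\right] = \int_{[0,1)}\br{e^{\theta\phi(t)}-1-\theta\phi(t)}\,d\mathbb{\Lambda}(t).
\]

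Next I would reduce $K(\theta)$ to the standard sub-gamma form. For $0 < \theta < 3/L$ I expand $e^u - 1 - u = \sum_{k\ge 2} u^k/k!$ with $u = \theta\phi(t)$, use $\abs{\phi}\le L$ to control the higher moments by the second moment via $\int \phi^k\,d\mathbb{\Lambda} \le L^{k-2}\int\phi^2\,d\mathbb{\Lambda}$, and invoke $k! \ge 2\cdot 3^{k-2}$, which yields
\[
K(\theta) \;\le\; \frac{\nu}{L^2}\br{e^{\theta L}-1-\theta L} \;\le\; \frac{\theta^2 \nu/2}{1-\theta L/3}, \qquad \nu := \int_{[0,1)}\phi^2\,d\mathbb{\Lambda}.
\]
The variance factor $\nu$ is the quantity playing the role of $\mathbb{\Lambda}(\phi)$ in the statement: the two coincide for the prototypical case $\phi = \mathbf{1}_I$ (where $\mathbb{Y}(\phi)$ is genuinely Poisson with mean $\mathbb{\Lambda}(\phi)$), and more generally $\nu$ is controlled through the boundedness of $\phi$.

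A standard Chernoff optimisation then closes the argument. For the upper tail I use $\P(\mathbb{Y}(\phi)-\mathbb{\Lambda}(\phi) > t) \le \exp\!\br{-\theta t + K(\theta)}$ and optimise over $\theta$; carrying this out for the sub-gamma bound (equivalently, optimising the exact Bennett exponent $\tfrac{\nu}{L^2}h(tL/\nu)$ with $h(x)=(1+x)\log(1+x)-x$ and relaxing through $h(x)\ge \tfrac{x^2}{2(1+x/3)}$) produces $\exp\!\cbr{-t^2/(2(\nu + tL/3))}$. Running the identical argument on $-\phi$ controls the lower tail, and a union bound supplies the factor $2$, giving the first display. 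The "in particular" claim follows immediately: for $t \ge \mathbb{\Lambda}(\phi)$ one has $\mathbb{\Lambda}(\phi) + tL/3 \le tL + tL/3 = \tfrac{4}{3}tL$, so the exponent is at most $-3t/(8L)$.

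The routine ingredients here are the Campbell exponential-moment identity and the moment bookkeeping that turns $K(\theta)$ into the sub-gamma shape. The hard part will be purely in the constants: getting the elementary inequality and the Chernoff step to reproduce exactly the $+\,tL/3$ term in the denominator (rather than a cruder $+\,tL$), and verifying that the two-sided tail needs only $\abs{\phi}\le L$ — so that $L$ must be read as a bound on $\abs{\phi}$, not merely a one-sided bound on $\phi$.
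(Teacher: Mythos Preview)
Your argument is correct and is in fact the cleaner route. The paper does not compute the Poisson moment generating function directly; instead it discretises. It partitions $[0,1)$ into $P$ pieces, writes the resulting Poisson counts as limits of sums of Bernoulli variables (so that $\mathbb{Y}(\phi)$ becomes a double limit of finite sums of bounded, independent terms $Z_{m,p}^{(M,P)} = Y_m^{(M,P)}\phi(p/P)$), applies the classical Bernstein inequality to each finite sum, and then lets $M,P\to\infty$. Your approach via Campbell's formula for $\E[e^{\theta\mathbb{Y}(\phi)}]$ and a direct sub-gamma bound on $K(\theta)$ reaches exactly the same exponent without any discretisation, and hence sidesteps the limit-interchange issues that the paper's argument leaves implicit. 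You are also right to flag that the natural variance proxy emerging from either argument is $\nu=\mathbb{\Lambda}(\phi^2)$ rather than $\mathbb{\Lambda}(\phi)$; the paper's proof has the same feature (its $\sigma^{(M,P)}$ is a sum of second moments and in the limit equals $\mathbb{\Lambda}(\phi^2)$, despite being identified there with $\mathbb{\Lambda}(\phi)$), so your observation that the two coincide only for indicators --- which is the case actually used downstream --- is the correct reading. One small point on the ``in particular'' step: your inequality $\mathbb{\Lambda}(\phi)+tL/3\le tL+tL/3$ uses $\mathbb{\Lambda}(\phi)\le tL$, which follows from the stated hypothesis $t\ge\mathbb{\Lambda}(\phi)$ only when $L\ge 1$; this is harmless in the paper's application (Haar wavelets with $L=2^{j/2}\ge 1$) but is worth noting.
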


In particular, since $\|\+U\|_{2,\infty} = \O(\sqrt{\log(N)/N\rho_N})$ by assumption, we have that $\mathbb{\Lambda}_{uv}(\phi^b) = \O(\log(N))$ and therefore by Lemma~\ref{lem:concentration} we have that $|\mathbb{Y}_{uv}(\phi^b) - \mathbb{\Lambda}_{uv}(\phi^b)| = O_{\P}(\log(N))$. By a union bound, this holds \emph{simultaneously} for all $u,v \in \{1,\ldots,N\},b \in \{1,\ldots,B\}$.

To obtain a bound on $\|\+X - \E\+X\|_2$, we observe that this is equal to $\|\+E\|_2$ where $\+E$ is the \emph{symmetric dilation} of $\+X - \E\+X$ \citep[Theorem 7.3.3]{horn2012matrix}. I.e.
\begin{equation*}
    \+E = \begin{pmatrix}
        \+0 & \+X - \E\+X \\ (\+X - \E\+X)^\top & \+0
    \end{pmatrix}.
\end{equation*}
We then apply the following concentration inequality for random symmetric matrices to $\+E$ which is Corollary 3.12 of \citet{bandeira2016sharp}.

\begin{lemma}[Corollary~3.12 of \citet{bandeira2016sharp}]
\label{lem:matrix_con}
    Let $\+M$ be an $N \times N$ symmetric matrix whose entries $m_{ij}$ are independent random variables which obey
    \begin{equation*}
        \E(m_{ij}) = 0, \qquad \abs{m_{ij}} \leq L, \qquad \sum\limits_{j=1}^N \E(m_{ij}^2) \leq \nu
    \end{equation*}
    for all $i, j$. There exists a universal constant $\tilde{C}>0$ such that for any $t \geq 0$,
    \begin{equation*}
        \P\cbr{\norm{\+M}_2 \geq 3\sqrt{\nu} + t} \leq N \exp \br{-\frac{t^2}{\tilde{C}L^2}}.
    \end{equation*}
\end{lemma}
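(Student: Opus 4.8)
The statement is a standard operator-norm concentration bound for a symmetric matrix with bounded, independent, mean-zero entries; the delicate feature is the dimension-free leading constant $3\sqrt{\nu}$, since a crude argument would pay a spurious $\sqrt{\log N}$ factor multiplying $\sqrt{\nu}$. The plan is to prove it by the symmetrized moment method followed by Markov's inequality, optimizing the moment order $p$ against the deviation $t$. First I would reduce to trace moments: because $\+M$ is symmetric its eigenvalues are real, so $\norm{\+M}_2^{2p} \le \tr(\+M^{2p})$ for every positive integer $p$, giving $\E\norm{\+M}_2^{2p} \le \E\tr(\+M^{2p})$. Expanding the trace yields
\begin{equation*}
    \E\tr(\+M^{2p}) = \sum_{i_0,\ldots,i_{2p-1}} \E\bigl[m_{i_0 i_1} m_{i_1 i_2}\cdots m_{i_{2p-1} i_0}\bigr],
\end{equation*}
a sum over closed walks of length $2p$ on the vertices $\{1,\ldots,N\}$. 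Since the entries $\{m_{ij}\}_{i\le j}$ are independent and mean-zero, a walk survives only if every edge (unordered pair) it visits is traversed at least twice.

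The main obstacle is the combinatorial estimate of this sum. I would organize the surviving walks by the multigraph they trace. Tree-like walks, in which each of the $p$ distinct edges is traversed exactly twice, are dominant: each doubled edge contributes a factor $\E m_{ij}^2$, and summing these out using the variance budget $\sum_j \E m_{ij}^2 \le \nu$ (one free vertex summation per newly opened edge) produces $\nu^p$; the starting vertex $i_0$ ranges freely, giving the prefactor $N$; and the number of admissible walk shapes is a Catalan number, bounded by $4^p$. Edges traversed $k\ge 3$ times are cheaper, each extra traversal trading a $\sqrt{\nu}$ for an $L$ and being suppressed in the count. Careful bookkeeping then gives
\begin{equation*}
    \E\tr(\+M^{2p}) \le N\,\bigl(2\sqrt{\nu} + C L\sqrt{p}\bigr)^{2p}
\end{equation*}
for an absolute constant $C$. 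Keeping the leading constant at $2$, rather than letting a $\sqrt{\log N}$ creep in front of $\sqrt{\nu}$, is precisely what makes the final constant $3$ attainable, and is exactly where the refined pairing/comparison analysis of \citet{bandeira2016sharp} does its work.

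Finally I would apply Markov's inequality with $s = 3\sqrt{\nu} + t$:
\begin{equation*}
    \P\bigl(\norm{\+M}_2 \ge 3\sqrt{\nu} + t\bigr) \le s^{-2p}\,\E\tr(\+M^{2p}) \le N\left(\frac{2\sqrt{\nu} + CL\sqrt{p}}{3\sqrt{\nu} + t}\right)^{2p}.
\end{equation*}
Choosing the integer $p \asymp t^2/L^2$ small enough that $CL\sqrt{p} \le t/2$ forces the bracketed ratio to be at most $2/3$ (one checks the two regimes $t \gtrsim \sqrt{\nu}$ and $t \lesssim \sqrt{\nu}$ separately), so the bound becomes $N(2/3)^{2p} \le N\exp(-t^2/(\tilde{C} L^2))$ for a universal $\tilde{C}$. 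The $N$ prefactor in the statement is exactly the one surviving from the trace, appearing as the $N^{1/2p} = e^{\log N / 2p}$ factor once $p \gtrsim \log N$; and for small $t$, where the choice $p \asymp t^2/L^2$ would give $p < 1$, the inequality holds trivially since probabilities never exceed one.
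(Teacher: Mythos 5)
First, note that the paper does not prove this lemma at all: it is imported verbatim as Corollary~3.12 of \citet{bandeira2016sharp} and used as an external tool in the proof of Theorem~4.1, so there is no internal argument to compare yours against. Judged on its own terms, your outline correctly reproduces the architecture of the actual proof in that reference: the reduction $\E\norm{\+M}_2^{2p}\le\E\tr(\+M^{2p})$, the expansion over closed walks in which every visited edge must be traversed at least twice, the role of the row-variance budget $\nu$ and of the Catalan-number count of tree-like shapes, and the final Markov step. Your optimization over $p$ is also sound: the observation that $(2\sqrt{\nu}+t/2)/(3\sqrt{\nu}+t)\le 2/3$ uniformly in $t$ and $\nu$ is exactly the computation that converts the moment bound into the stated tail with leading constant $3$, and your handling of the degenerate regime $p<1$ is fine.

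As a self-contained proof, however, there is a genuine gap, and it sits precisely at the only hard step: the trace-moment estimate $\E\tr(\+M^{2p})\le N\bigl(2\sqrt{\nu}+CL\sqrt{p}\bigr)^{2p}$ is asserted with a heuristic (``careful bookkeeping then gives\dots'') rather than derived. Obtaining the dimension-free constant in front of $\sqrt{\nu}$ --- rather than a cruder bound whose $2p$-th root degrades to $\sqrt{\log N}\cdot\sqrt{\nu}$ --- requires the sharp classification of walk shapes and the comparison argument that constitutes the technical bulk of \citet{bandeira2016sharp}, and you acknowledge as much by explicitly deferring to that analysis. So the proposal is an accurate reconstruction of the skeleton of the cited proof, with its core combinatorial lemma cited rather than proven. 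Since the paper itself also treats the result as a black box, this is arguably the appropriate level of detail here, but it should not be mistaken for an independent proof of the lemma.
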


We then apply Lemma~\ref{lem:matrix_con} to $\+E$, conditional on $\+E_{uv} = \O(\log(N))$ (which holds with overwhelming probability due to the above derivation) with $L = \O(\log(N))$ and $
\nu = \O(BN\rho_N) = \O(N\rho_N)$ (since $B$ is assumed to be fixed). We obtain
\begin{equation*}
    \|\+X - \E\+X\|_2 = \|\+E\|_2 = O_\P(\sqrt{N\rho_N} + \log^{3/2}(n)) = O_\P(\sqrt{N\rho_N})
\end{equation*}
where the final inequality follows from the assumption that $N\rho_N = \Omega(\log^3(n))$. This completes the proof.

\subsection{Proof of Lemma~\ref{lem:concentration}}
\label{sec:concentration_proof}
    For a given $N \in\mathbb{N}$, let $X_1^{(N)},\ldots,X_N^{(N)}$ denote $N$ independent Poisson random variables with rates 
    \begin{equation*}
        \lambda_n^{(N)} := 
        \mathbb{\Lambda}\left(\left[\frac{n-1}{N}, \frac{n}{N}\right]\right)
        \qquad n = 1,\ldots,N.
    \end{equation*}
    Then, by the definition of an inhomogeneous Poisson process we have that
    \begin{equation*}
        \mathbb{Y}(\phi) = \lim_{N\to \infty} \sum\limits_{n=1}^N
        X_n^{(N)}
        \phi\br{\frac{n}{N}}.
    \end{equation*}
    In addition, by a property of Poisson random variables, we have that
    \begin{equation*}
        X_n^{(N)} = \lim_{M \to \infty} \sum\limits_{m=1}^M Y_{m}^{(M, N)}
    \end{equation*}
    where $Y_1^{(M, N)},\ldots,Y_M^{(M, N)}$ are independent and identically-distributed Bernoulli random variables with success probabilities $\lambda_n^{(N)}/M$. Therefore
    \begin{equation*}
        \mathbb{Y}(\phi) = \lim_{M\to \infty} \lim_{N \to \infty} \sum\limits_{m=1}^M \sum\limits_{n=1}^N Z_{m, n}^{(M, N)}, \qquad Z_{m, n}^{(M, N)} = Y_m^{(M, N)} \phi\br{\frac{n}{N}}.
    \end{equation*}
    Observe that
    \begin{equation*}
        \E Z^{(M, N)}_{m, n} = \frac{\lambda_n^{(N)}}{M} \phi\br{\frac{n}{N}}.
    \end{equation*}
    and define $E_{m, n}^{(M ,N)} = Z_{m, n}^{(M, N)} - \E Z_{m, n}^{(M, N)}$ which are independent zero-mean random variables. 
    Then, we have that $\abs{E_{m, n}^{(M, N)}} \leq L$ and for sufficiently large $M$
    \begin{equation*}
        \sigma^{(M,N)} := \sum\limits_{m=1}^M \sum\limits_{n=1}^N \E\cbr{\br{E_{m,n}^{(M,N)}}^2} \leq \sum\limits_{m=1}^M \sum\limits_{n=1}^N Z^{(M,N)}_{m,n}.
    \end{equation*}
    Note that taking limits on both sides we have that $\lim_{M\to\infty} \lim_{N\to\infty}\sigma^{(M, N)} = \mathbb{\Lambda}(\phi)$.

    Now, by Bernstein's inequality, we have that
    \begin{equation*}
        \sum\limits_{m=1}^M \sum\limits_{n=1}^N \P\br{\abs{E_{m,n}^{(M,N)}} > t} \leq 2 \exp\cbr{-\frac{t^2}{2(\sigma^{(M,N)} + t\phi_{\max} /3)}}.
    \end{equation*}
    Taking $M\to \infty$ and $N \to \infty$ on both sides, we obtain the desired bound.

\section{Proof of Theorem 4.2\label{proof:method}}


    
        


\begin{lemma}[Lyapunov's Central Limit Theorem (CLT) \label{lem:lyapunov}]
    Let $X_1,\ldots,X_n$ be independent random variables with finite mean and variance. If for some $\delta>0$:
    \begin{align*}
        \frac{1}{s_n^{2+\delta}} \sum\limits_{i=1}^n \E\brackets{|X_i - \E[X_i]|^{2+\delta}} \underset{n\to\infty}{\to} 0
    \end{align*}
    where $s_n^2 = \sum\limits_{i=1}^n \Var[X_i]$ is the cumulated variance, then the sum $S_n = \sum\limits_{i=1}^n X_i$ converges in distribution to a normal distribution.
    \begin{align*}
        \frac{S_n - \E[S_n]}{s_n} \xrightarrow{d} \mathcal{N}(0,1)
    \end{align*}
\end{lemma}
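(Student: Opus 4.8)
This is the classical Lyapunov Central Limit Theorem, so the plan is to give a self-contained proof via characteristic functions and Lévy's continuity theorem. First I would center the variables, replacing each $X_i$ by $X_i - \E[X_i]$; this leaves the variances $\sigma_i^2 = \Var[X_i]$, the cumulative variance $s_n^2$, and the Lyapunov ratio $L_n := s_n^{-(2+\delta)}\sum_{i=1}^n \E\brackets{|X_i - \E[X_i]|^{2+\delta}}$ all unchanged, so I may assume $\E[X_i] = 0$. I also normalize by setting $\xi_{n,i} := X_i/s_n$, so that $S_n/s_n = \sum_{i=1}^n \xi_{n,i}$ and $\sum_{i=1}^n \sigma_{n,i}^2 = 1$ with $\sigma_{n,i}^2 := \sigma_i^2/s_n^2$. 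It costs nothing to assume $\delta \le 1$: if $\delta > 1$, a moment-interpolation (Hölder) argument shows that $L_n \to 0$ forces the third-moment ratio $s_n^{-3}\sum_i \E|X_i|^3 \to 0$, which is exactly the $\delta = 1$ condition. The target is then to prove $\varphi_n(t) := \prod_{i=1}^n \E[e^{it\xi_{n,i}}] \to e^{-t^2/2}$ for every fixed $t$.

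The core estimates are three. (i) A per-factor Taylor bound: from $|e^{ix} - 1 - ix + x^2/2| \le \min(|x|^3/6, |x|^2) \le |x|^{2+\delta}$ (the last step by splitting $|x|\le 1$ and $|x|>1$, valid because $0<\delta\le 1$) together with $\E[\xi_{n,i}] = 0$ and $\E[\xi_{n,i}^2] = \sigma_{n,i}^2$, I get $\bigl|\E[e^{it\xi_{n,i}}] - (1 - \tfrac{t^2}{2}\sigma_{n,i}^2)\bigr| \le |t|^{2+\delta}\,\E|\xi_{n,i}|^{2+\delta}$. (ii) Summing over $i$ gives $\sum_i \bigl|\E[e^{it\xi_{n,i}}] - (1-\tfrac{t^2}{2}\sigma_{n,i}^2)\bigr| \le |t|^{2+\delta} L_n \to 0$, since $\sum_i \E|\xi_{n,i}|^{2+\delta} = L_n$. (iii) A negligibility fact: by Jensen, $\sigma_{n,i}^2 \le (\E|\xi_{n,i}|^{2+\delta})^{2/(2+\delta)} \le L_n^{2/(2+\delta)}$, so $\max_i \sigma_{n,i}^2 \to 0$.

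To assemble, I would invoke the elementary product-comparison inequality: for complex numbers with $|a_i|,|b_i|\le 1$, $|\prod_i a_i - \prod_i b_i| \le \sum_i |a_i - b_i|$. Taking $a_i = \E[e^{it\xi_{n,i}}]$ (automatically of modulus $\le 1$) and $b_i = 1 - \tfrac{t^2}{2}\sigma_{n,i}^2$ (which lies in $[0,1]$ for $n$ large, by (iii)), step (ii) yields $|\varphi_n(t) - \prod_i b_i| \to 0$. It then remains to show $\prod_i b_i \to e^{-t^2/2}$: taking logarithms and using $\log(1-x) = -x + O(x^2)$ on the uniformly small arguments gives
\[
  \sum_{i=1}^n \log b_i = -\tfrac{t^2}{2}\sum_{i=1}^n \sigma_{n,i}^2 + O\Bigl(\max_i \sigma_{n,i}^2 \sum_{i=1}^n \sigma_{n,i}^2\Bigr) = -\tfrac{t^2}{2} + o(1),
\]
using $\sum_i \sigma_{n,i}^2 = 1$ and (iii). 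Hence $\varphi_n(t)\to e^{-t^2/2}$ pointwise, and Lévy's continuity theorem delivers $S_n/s_n \xrightarrow{d} \mathcal{N}(0,1)$.

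The main obstacle is the careful bookkeeping that ties the three ingredients to the single scalar $L_n$: the Taylor remainder, the Feller-type negligibility, and the logarithm expansion must all be dominated uniformly in $i$ by $L_n$ (or a power of it). The most delicate single point is the interpolation inequality $\min(|x|^3/6,|x|^2)\le |x|^{2+\delta}$, which is precisely what converts an otherwise unavailable third-order remainder into something summable against the $(2+\delta)$-th absolute moments that the hypothesis supplies; this is also the step that forces the preliminary reduction to $\delta \le 1$.
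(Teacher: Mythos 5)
Your proof is correct. Note first that the paper does not prove this lemma at all: it is stated as the classical Lyapunov CLT and invoked as a black box in the proof of Theorem 4.2, so there is no in-paper argument to compare against. Your self-contained characteristic-function proof is the standard textbook route and all the ingredients check out: the reduction to $\delta \le 1$ via the interpolation $\E|X_i|^{2+\delta'} \le (\E X_i^2)^{\theta}(\E|X_i|^{2+\delta})^{1-\theta}$ followed by H\"older on the sum does show the Lyapunov ratio for a smaller exponent is bounded by a power of $L_n$; the second-order Taylor bound $\bigl|e^{ix}-1-ix+\tfrac{(ix)^2}{2}\bigr|\le\min\bigl(|x|^3/6,\,|x|^2\bigr)\le |x|^{2+\delta}$ is valid precisely for $0<\delta\le 1$, which is why the reduction is needed; the Feller-type negligibility $\max_i\sigma_{n,i}^2\le L_n^{2/(2+\delta)}$ follows from Lyapunov's moment inequality as you say; and the product-comparison inequality plus the logarithm expansion (with the error controlled by $\max_i\sigma_{n,i}^2\sum_i\sigma_{n,i}^2\to 0$) closes the argument via L\'evy continuity. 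Two minor bookkeeping points you should make explicit in a final write-up: the hypothesis implicitly requires $\E|X_i|^{2+\delta}<\infty$ (otherwise the Lyapunov ratio is not defined), and one needs $s_n>0$ for all large $n$ for the normalization to make sense; both are standard and do not affect the validity of your argument.
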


\begin{proof}

The proof of Theorem 4.2 applies Lemma \ref{lem:lyapunov} to the family of centered random variables

$$
\rho_{uv} \,\deltaequal\, \mUhat_{u,p}\,\mUhat_{v,q}\,\bigl(\Ybb_{uv}(\phi^b) - \Lambdabb_{uv}(\phi^b)\bigr),
$$

which are the $N^2$ terms in the sum forming the numerator of Equation \ref{eq:asymptotic-normality} in Section \ref{sec:denoising}: 

\begin{align*}
    \Sbbhat_{pq}(\phi^b) - \mathbb{E}[\Sbbhat_{pq}(\phi^b)] = \sum\limits_{u,v\in[N]^2} \rho_{uv}.
\end{align*}

The variables $\rho_{uv}$ satisfy $\mathbb{E}[\rho_{uv}] = 0$, and their variance is given by
\begin{align*}   
\Var[\rho_{uv}] 
= \mUhat_{u,p}^2 \mUhat_{v,q}^2 \Var[\Ybb_{uv}(\phi^b)]
= \mUhat_{u,p}^2 \mUhat_{v,q}^2 \Lambdabb_{uv}((\phi^b)^2).
\end{align*}

    This last equality follows from the fact that a Poisson process projection $\int_{\Tcal} \phi^b(t) d\Ybb_{uv}(t) $ can be viewed as a weighted sum of independent infinitesimal Poisson increments $d\Ybb_{uv}(t)\sim \Poisson(\Lambdabb_{uv}(t)dt)$, each having Poisson variance $\Lambdabb_{uv}(t)dt$ and weighted by $\phi^b(t)$. These weights get squared in the variance, leading to:
    \begin{align*}
        \Var\parentheses{
        \int_{\Tcal} \phi^b(t) d\Ybb_{uv}(t) 
        } = 
        \int_{\Tcal} \phi^b(t)^2 \Lambdabb_{uv}(t) dt
    = \Lambdabb_{uv}((\phi^b)^2).
    \end{align*}
    
We will verify the Lyapunov condition with $\delta = 1$. Namely, our goal is to show that

$$
\frac{1}{s_N^3} \sum\limits_{u,v\in[N]^2} \mathbb{E}[|\rho_{uv}|^3] \to 0 \quad \text{as } N\to\infty,
$$

where the cumulated variance in the denominator is defined as
\begin{align*}
s_N^2 \deltaequal \sum\limits_{u,v\in[N]^2} \mUhat_{u,p}^2 \mUhat_{v,q}^2, \Lambdabb_{uv}((\phi^b)^2).
\end{align*}

To do so, we upper-bound the third absolute moments and lower-bound the cumulated variance.



    \paragraph{Lower bounding the cumulated variance. }
    We have that
\begin{align*}
    \Lambdabb_{uv}\left((\phi^b)^2\right) &= \int \phi^b(t)^2 \Lambda_{uv}(t)\,dt \\
    &\geq \alpha_N \int \phi^b(t)^2 \,dt \\
    &= \alpha_N \cdot \|\phi^b\|_2^2 \\
    &= \alpha_N,
\end{align*}

where the inequality follows by assumption.

Substituting this inequality into the previous equation yields
\begin{align*}
    s_N^2
    &\geq \alpha_N \cdot \sum\limits_{u,v \in [N]^2} \mUhat_{u,p}^2 \mUhat_{v,q}^2 \\
    &= \alpha_N \cdot \left( \sum\limits_{u \in [N]} \mUhat_{u,p}^2 \right) \left( \sum\limits_{v \in [N]} \mUhat_{v,q}^2 \right) \\
    &= \alpha_N \cdot \|\mUhat_{:,p}\|_2^2 \, \|\mUhat_{:,q}\|_2^2 \\
    &= \alpha_N,
\end{align*}

where the final equality holds since the columns of $\hat{\+U}$ have unit norm. This shows that the cumulative variance is lower bounded by the factor $\alpha_N$, namely the lower bound on the intensity function.


    \paragraph{Upper bounding the third moment of $\rho_{uv}$. }
    Now that we have lower bounded the cumulated variance, we need to upper bound the third moment of $\rho_{uv}$.
    We have
    \begin{align*}
        \expectation{|\rho_{uv}|^{3}} &=
        |\mUhat_{up}|^3 |\mUhat_{vq}|^3\expectation{|\Ybb_{uv}(\phi^b) - \Lambdabb_{uv}(\phi^b)|^{3}} \\
    \end{align*}

    By assumption, $|\mUhat_{up}|^3 |\mUhat_{vq}|^3 \leq (\mu_N/n)^3$, and by the
     Cauchy-Schwartz inequality, we have
     
    \begin{align*}
        \expectation{|\Ybb_{uv}(\phi^b) - \Lambdabb_{uv}(\phi^b)|^{3}} 
        &= \expectation{|\Ybb_{uv}(\phi^b) - \Lambdabb_{uv}(\phi^b)|^{1}|\Ybb_{uv}(\phi^b) - \Lambdabb_{uv}(\phi^b)|^{2}} \\ 
        &\leq \sqrt{\underbrace{\expectation{\parentheses{\Ybb_{uv}(\phi^b) - \Lambdabb_{uv}(\phi^b)}^{2}}}_{m_2}\underbrace{\expectation{\parentheses{\Ybb_{uv}(\phi^b) - \Lambdabb_{uv}(\phi^b)}^{4}}}_{m_4} } \\ 
    \end{align*}  

    The two factors in the right-hand side are the second and fourth central moments of $\Ybb_{uv}(\phi^b)$, which we denote as $m_2$ and $m_4$ respectively. 
    These moments relate to the so-called cumulants $\kappa_2$ and $\kappa_4$ of $\Ybb_{uv}(\phi^b)$, as we have: 
    \begin{align*}
        m_2 &= \kappa_2\\ 
        m_4 &= \kappa_4 + 3\kappa_2^2,
    \end{align*}
    where $\kappa_2$ is the second cumulant and $\kappa_4$ is the fourth cumulant of the random variable $\Ybb_{uv}(\phi^b)$. 
    We will use Campbell's theorem  from \cite{kingman1992poisson} to express $\kappa_2$ and $\kappa_4$. For the Poisson Processes $\Ybb_{uv}$ with intensity $\Lambda_{uv}$ and any measurable function $\phi$, the cumulant generating function is given by Campbell's theorem (Equation 3.6 from \cite{kingman1992poisson}) by: 
$$
K(\lambda)=\log(\mathbb{E}\left[\exp\left(\lambda \Ybb_{uv}(\phi)\right)\right]) =\int \parentheses{e^{\lambda\phi(t)}-1 }\Lambda_{uv}(t)dt
$$

By expanding $e^{\lambda\phi^b(t)}-1 = \sum\limits_{k=1}^{\infty} \frac{\lambda^k}{k!}(\phi^b(t))^k$ and applying the linearity of the integral and the fact that $(\phi^b)^k\Lambda_{uv}$ are all compactly supported and continuous (hence integrable), we get:

$$
K(\lambda) = \sum\limits_{k=1}^{\infty} \frac{\lambda^k}{k!}\int \phi^b(t)^k\Lambda_{uv}(t)dt
$$
By evaluating the second and fourth derivatives of \(K(\lambda)\) at \(\lambda=0\), we obtain the second and fourth cumulants
\[
\kappa_2 \;=\;\Lambda_{uv}\bigl((\phi^b)^2\bigr)
\;=\;\int \phi^b(t)^2\,\Lambda_{uv}(t)\,\mathrm{d}t,
\quad
\kappa_4 \;=\;\Lambda_{uv}\bigl((\phi^b)^4\bigr)
\;=\;\int \phi^b(t)^4\,\Lambda_{uv}(t)\,\mathrm{d}t.
\]

By assumption, $\+\Lambda_{uv}(t) \leq \beta_N$,
and since 
$\phi^b$ are fixed,
we have
\begin{align*}
  \kappa_2
  &= \int \phi^b(t)^2\,\Lambda_{uv}(t)\,\mathrm{d}t
   \;\le\; \beta_N\!\int \phi^b(t)^2\,\mathrm{d}t
   = \beta_N\,\|\phi^b\|_2^2
   = \beta_N, \\[6pt]
  \kappa_4
  &= \int \phi^b(t)^4\,\Lambda_{uv}(t)\,\mathrm{d}t
   \;\le\; \beta_N\!\int \phi^b(t)^4\,\mathrm{d}t \deltaequal \eta_N = \O(\beta_N).
\end{align*}
Therefore
\[
\expectation{|\Ybb_{uv}(\phi^b) - \Lambdabb_{uv}(\phi^b)|^{3}} = \sqrt{m_2\,m_4}
=\sqrt{\kappa_2\,\bigl(\kappa_4 + 3\,\kappa_2^2\bigr)}
\;\le\;
\sqrt{\,\beta_N\,\bigl(\eta_N + 3\,\beta_N^2\bigr)\,}
=\O(\beta_N^{3/2}).
\]
    As a result, we have 
    \begin{align*}
            \expectation{|\rho_{uv}|^{3}} =\O\left\{
    \left(\frac{\mu_N}{N}\right)^3\beta_N^{3/2}
            \right\} .
    \end{align*}

    

    Summing over the $N^2$ terms, we obtain
    \begin{align*}
        \sum\limits_{u,v\in[N]^2} \E[|\rho_{uv}|^{3}] = \O\parentheses{\frac{\mu_N^3\beta_N^{3/2}}{N}}.
    \end{align*}
    Dividing this expression by $s_N^3$ gives
\begin{align*}
    \frac{1}{s_N^{3}} \sum\limits_{u,v\in[N]^2} \E[|\rho_{uv}|^{3}] 
    &=\O\left\{
    \frac{\mu_N^3}{N}
    \left(\frac{\beta_N}{\alpha_N}\right)^{3/2}
            \right\} \to 0 \qquad \text{ as $N \to \infty$}
\end{align*}

which vanishes by assumption.

This shows that the Lyapunov condition is satisfied, and we can apply the Lyapunov CLT (Lemma \ref{lem:lyapunov}) to conclude that as $N\to\infty$,
\begin{align*}
    \frac{
        \sum\limits_{u,v\in[N]^2} \rho_{uv}
    }{
        s_N
    }&=
    \frac{
        \sum\limits_{u,v\in[N]^2} \mUhat_{u,p} \mUhat_{v,q} \brackets{\Ybb_{uv}(\phi^b)- \Lambdabb_{uv}(\phi^b)}
    }{
        \sqrt{\sum\limits_{u,v\in[N]^2} \mUhat_{u,p}^2 \mUhat_{v,q}^2 \Lambdabb_{uv}\parentheses{(\phi^b)^2}}
    }
    \xrightarrow{d} \mathcal{N}(0,1).
\end{align*}
\end{proof}


\section{Experimental Setup}

\subsection{Synthetic Data Generation}
We generate two types of synthetic networks to evaluate our methods: the Erdös–Rényi (ER) blocks model and a Dynamic Stochastic Block Model (DSBM). Their time‐varying intensity functions are shown in Figure~\ref{fig:synthetic_intensities}.

\begin{figure}[h]
    \centering
    \begin{subfigure}{0.48\textwidth}
        \includegraphics[width=\linewidth]{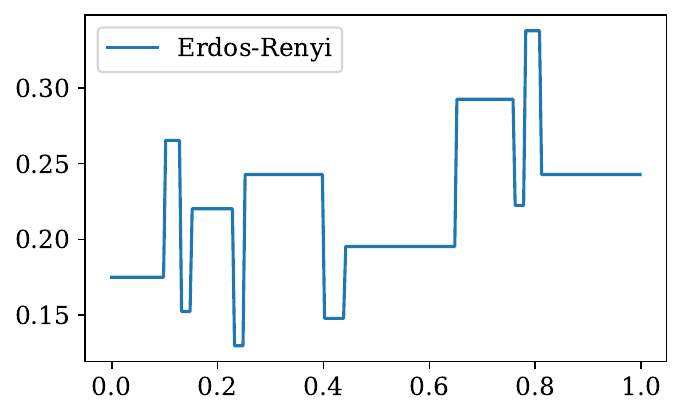}
        \caption{ER‐Blocks model}
        \label{fig:er_intensity}
    \end{subfigure}
    \hfill
    \begin{subfigure}{0.48\textwidth}
        \includegraphics[width=\linewidth]{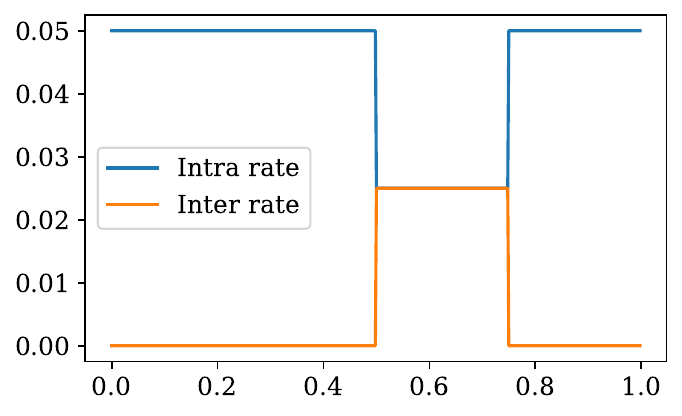}
        \caption{Dynamic SBM model}
        \label{fig:sbm_intensity}
    \end{subfigure}
    \caption{Intensity functions for the synthetic network models. (a) ER‐blocks uses a piecewise‐constant intensity with abrupt jumps. (b) DSBM distinguishes intra‐community (blue) and inter‐community (orange) intensities, with a mid‐experiment perturbation.}
    \label{fig:synthetic_intensities}
\end{figure}

\subsubsection{Erdös–Rényi (ER) Blocks Model}
In this model, the intensity between every node pair is the same, and defined as the following piecewise‐constant function:
\[
\Lambda_{uv}(t)
\;=\;
\sum\limits_{k=1}^{K} h_k \, K\bigl(t - t_k\bigr),
\qquad
K(x) = \frac{1 + \operatorname{sign}(x)}{2},
\]
with
\[
\{t_k\}_{k=1}^{K}
= \{0.10,\,0.13,\,0.15,\,0.23,\,0.25,\,0.40,\,0.44,\,0.65,\,0.76,\,0.78,\,0.81\},
\]
\[
\{h_k\}_{k=1}^{K}
= \{4,\,-5,\,3,\,-4,\,5,\,-4.2,\,2.1,\,4.3,\,-3.1,\,5.1,\,-4.2\}.
\]
This model, adapted from the synthetic example from \cite{donohoIdealSpatialAdaptation1994}, simulates a network with a single community, where the interaction intensity only depends on time, and not on other latent factors such as community assignments. 

\subsubsection{Dynamic Stochastic Block Model (DSBM)}
In this model, the nodes are partitioned into two communities \(\Ccal_1\) and \(\Ccal_2\). This time the intensity function varies depending on whether the node pair belongs to the same community (genering intra-community interactions) or to different communities (inter-community interactions):
\[
\Lambda_{uv}(t)
=
\begin{cases}
\lambda_{\mathrm{intra}}(t), & u,v \in \Ccal_1\ \text{or}\ u,v \in \Ccal_2,\\
\lambda_{\mathrm{inter}}(t), & \text{otherwise}.
\end{cases}
\]
As shown in Figure~\ref{fig:sbm_intensity}, both the intra and inter community intensities are piecewise‐constant functions. 
The intra-community intensity is set much higher than the inter-community intensity except on an interval $[0.5,0.75]$ where both intensities are equal. This model simulates the temporary fusion of two communities into a single one.

\subsection{Hyperparameter selection}

We now give more details on the methods used in the intensity estimation experiment and the associated hyperparameters.
We experimented with various parameters for the different method, and selected the ones which yielded the lowest MISE.

\begin{table}[htbp]
    \centering
    \caption{Hyperparameter selection for different methods}
    \begin{tabular}{llll}
        \toprule
        Method & Parameter & ER-blocks dataset & SBM dataset \\
        \midrule
        IPP-KDE & Bandwidth & 0.005 & 0.05 \\
        IPP-Hist & Number of bins ($M$) & 128 & 64 \\
        ANIE (ours) & Resolution level ($J$) & 8 & 6 \\
        & Significance level ($\alpha$) & 0.05 & 0.05 \\
        \bottomrule
    \end{tabular}
    \label{tab:hyperparameters}
\end{table}
\subsection{Resources}

\ptitle{Hardware used for the experiments}
All the experiments we run on a MacBook Air with an Apple M1 chip with 8 CPU cores and 8GB of RAM.

\ptitle{Fitting time of ANIE} We report the fitting time of ANIE vs the number of nodes in Figure \ref{fig:time_vs_nodes} for different levels for the maximum resolution $J$ of the Haar basis (which modulates the size of the orthonormal basis). We observe that the fitting time of ANIE is quadratic in the number of nodes, and scales exponentially with the number of levels. This underlines a limitation: in order to capture fine grained change, the number of levels $J$ must be large, which leads to an exponentially large number of coefficients to process. However, some optimizations could be made such as parallelizing the computation of the coefficients, or using a more efficient algorithm to compute the truncated SVD.

\begin{figure}[h]
    \centering
    \includegraphics[width=0.8\linewidth]{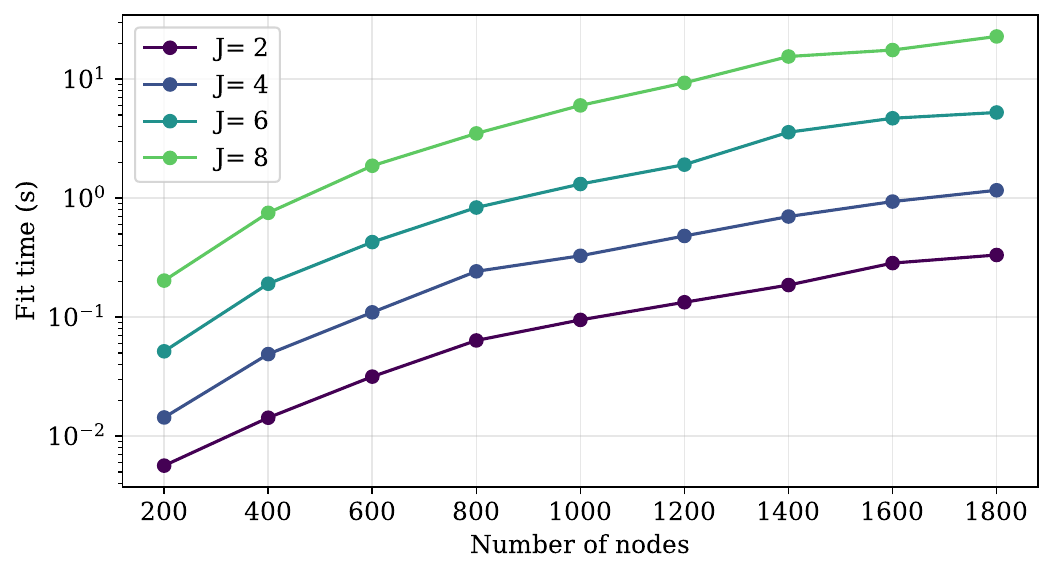}
    \caption{Fitting time of ANIE vs number of nodes for different values of $J$.}
    \label{fig:time_vs_nodes}
\end{figure}

\section{Effect of the Hyperparameters of ANIE}
In order to better illustrate the effect of the resolution $J$ on the estimation error, we ran the ANIE method on a simplified SBM dataset with different values. In this simplified setting, we parameterized the model such that a resolution $J=2$ is sufficient to capture the intensity function. We then ran the ANIE method with different values of $J$ and compared the estimation error of the linear and thresholded estimators.

\paragraph{Effect of the number of levels $J$}
Figure \ref{fig:sbm_error_vs_j} shows the effect of the number of levels $J$ on the estimation error. We observe that the linear estimator performs well for small values of $J$, but its performance degrades as $J$ increases. In contrast, the thresholded estimator maintains a low estimation error across all values of $J$, demonstrating its robustness to overfitting.
\begin{figure}[h]
    \centering
    \includegraphics[width=0.49\linewidth]{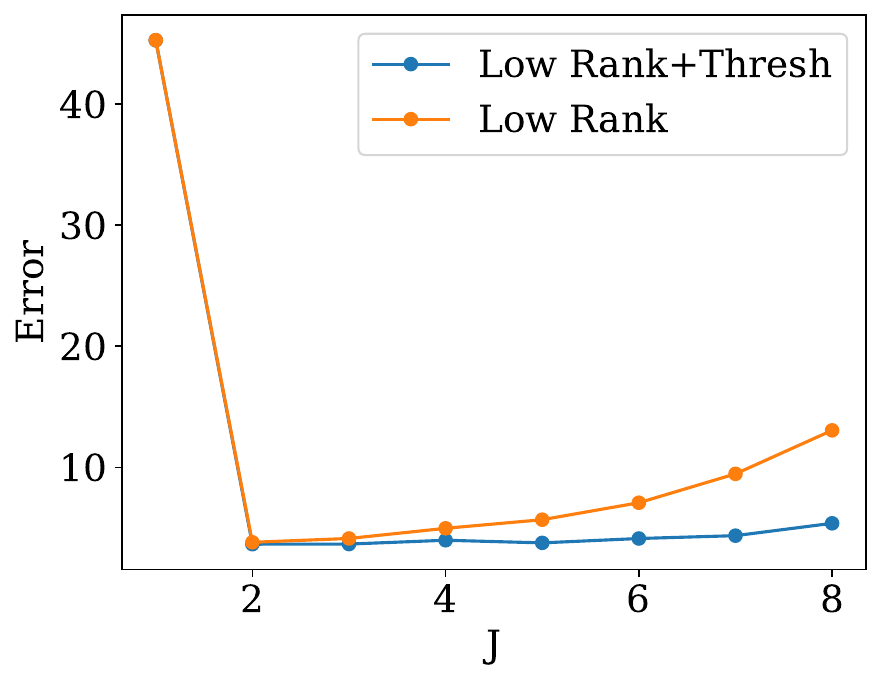}
    \caption{Estimation error vs number of levels for the linear and the thresholded estimator}
    \label{fig:sbm_error_vs_j}

\end{figure}

\section{LAD Implementation}
We use the publicly available implementation \url{https://github.com/shenyangHuang/LAD} to compare our method with LAD. We use the default parameters of the implementation.

\begin{makechanges}

\section{Case Studies: London Bike Dataset and Enron email network\label{appendix:real_data}}

Both dataset described below may be viewed as temporal networks, where continuous interactions represent trips between bike stations in London, or email exchanges between Enron employees. 
For both the datasets, we plot two discrete wavelet scaleograms which we here refer to as the naive and reconstructed scaleograms, by computing for each level~$j$, the Frobenius norm of either the naive coefficient estimates $\|\mathbb{Y}(\psi_{jk})\|_F$ or the empirical affinity coefficients $\|\hat{\Sbb}(\psi_{jk})\|_F$. It can be shown that the latter corresponds to the Frobenius norm of a low-rank reconstruction of the naive coefficients. Specifically, using the orthonormality of $\mUhat$, we have $\|\hat{\Sbb}(\psi_{jk})\|_F = \|\mUhat\mUhat^T \mathbb{Y}(\psi_{jk}) \mUhat\mUhat^T\|_F$. In both cases, we typically observe that the wavelet power of the reconstructed affinities between latent factors is more concentrated in low-frequency bands, while the naive per-edge estimates $\|\mathbb{Y}(\psi_{jk})\|_F$ exhibit more energy in the high-frequency bands.

\subsection{London Bike Dataset}

The London Bike dataset, published by Transport for London \cite{Cyclingdatatflgovuk}, has an inherent dynamic network structure which has been previously studied for instance in \cite{passinoGraphbasedMutuallyExciting2024a,rastelliContinuousLatentPosition2023}. We consider a week of data, from 1st to 8th of May 2017, and mark each starting trip from docking station $u$ to docking station $v$ at time $t$ an instantaneous event. This results in 219515 interactions between 780 nodes (bike stations). As shown on Figure ~\ref{fig:london_bike_additional}, plotting the naive and reconstructed wavelet scaleograms for this dataset allows us to identify periods and time scales during which significant structural changes occur, and to contrast these with changes that are merely due to fluctuations in exchange intensity between individual pairs of bike stations. Moreover, the
estimate obtained using ANIE enables us to visualize the bike stations in a low-dimensional space using a t-SNE plot, where stations that connect to similar neighborhoods at similar times appear close together. Coloring the bike stations by London borough reveals that stations from some boroughs, such as Tower Hamlets or Newham, tend to cluster tightly. In contrast, stations from boroughs like Westminster, Kensington and Chelsea, or Hammersmith and Fulham are more dispersed, indicating greater diversity in their patterns of use.

\begin{figure}
    
    \begin{subfigure}{\textwidth}
    \includegraphics[width=\linewidth]{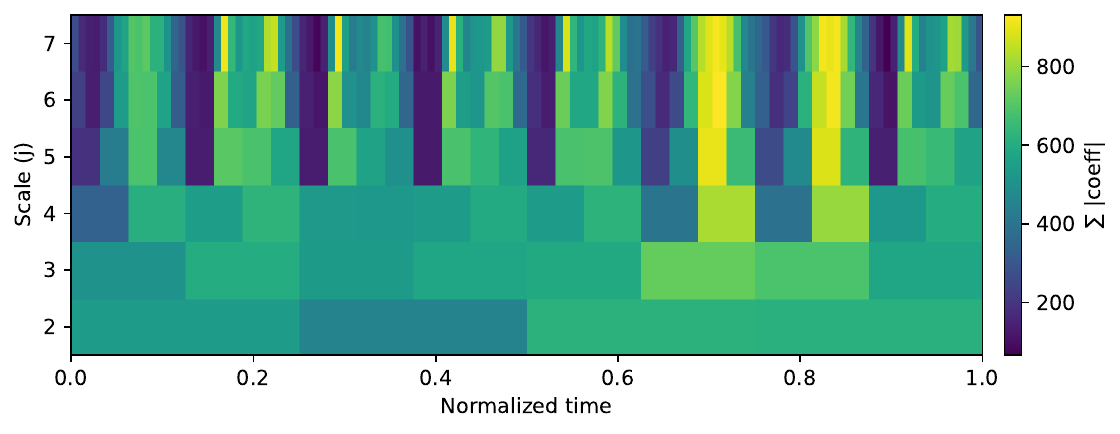}    
    \caption{Raw wavelet scaleogram $\norm{\Ybb(\psi_{jk})}_{\Fcal}$}  
    \end{subfigure}
    
    \begin{subfigure}{\textwidth}
        \includegraphics[width=\linewidth]{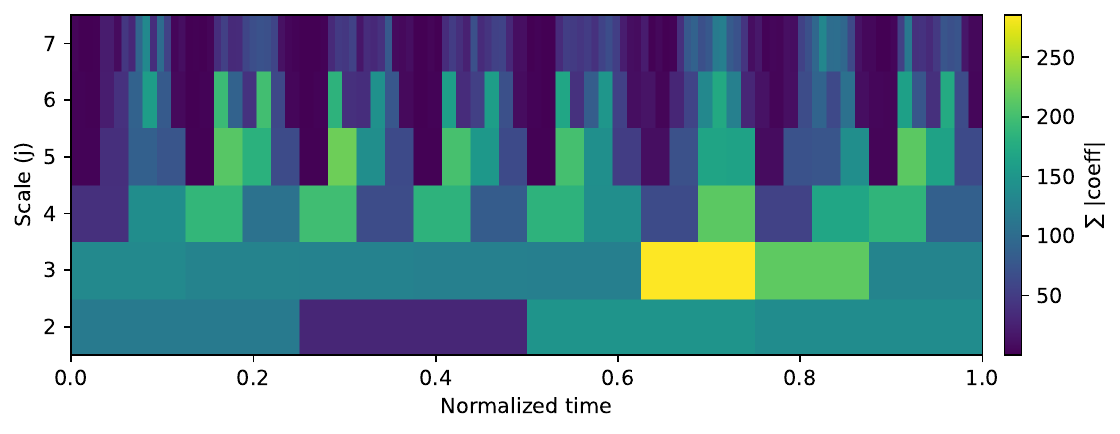}
        \caption{Denoised wavelet scaleogram $\norm{\Sbbhat(\psi_{jk})}_{\Fcal} = 
        \norm{\mUhat^T \Ybb(\psi_{jk}) \mUhat}_{\Fcal}$}
    \end{subfigure}
    \begin{subfigure}{\textwidth}
        \centering
        \includegraphics[width=0.9\linewidth]{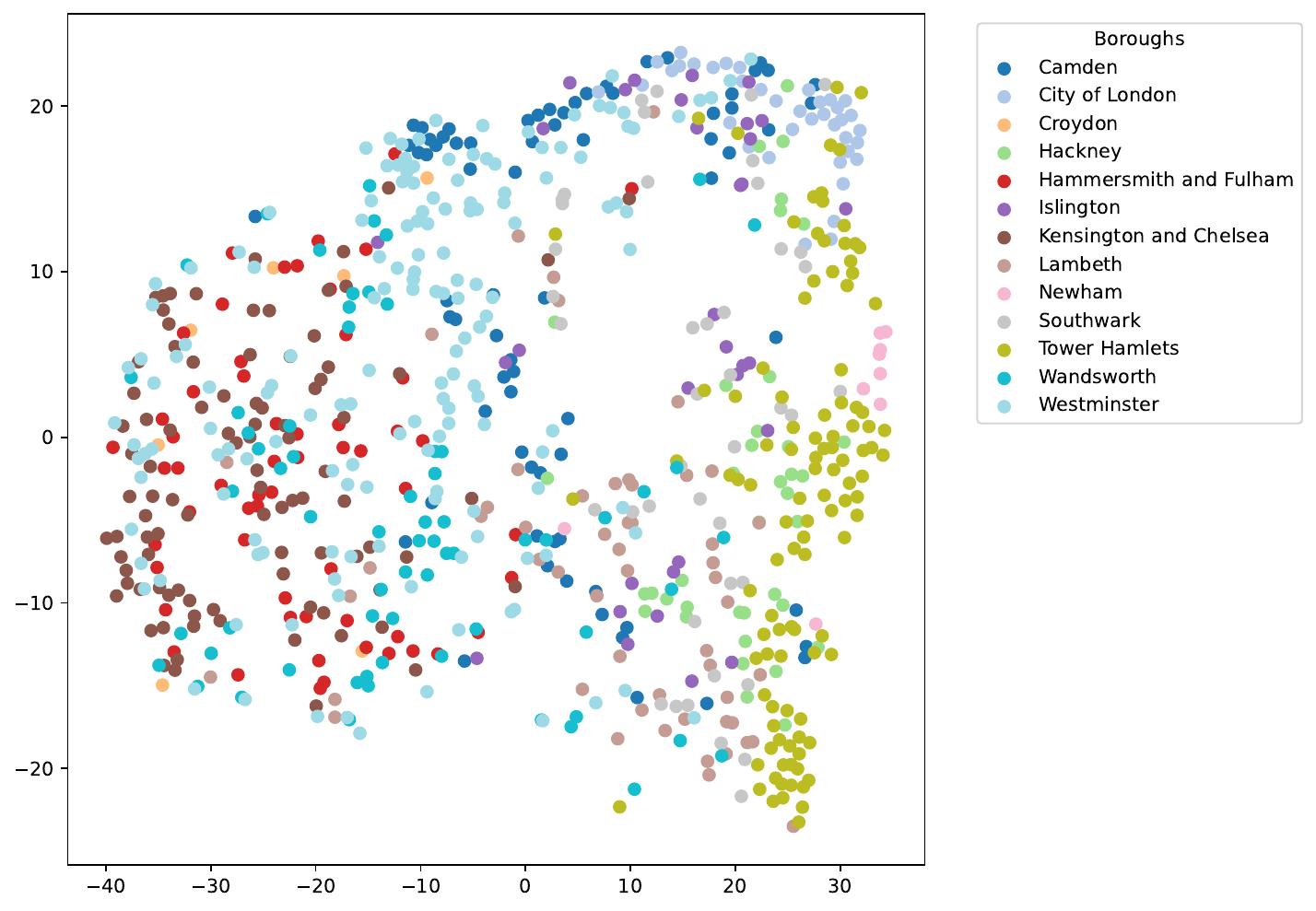}
        \caption{t-SNE embedding of the rows estimated subspace matrix $\mUhat$ colored by the boroughs of London}
    \end{subfigure}
    
    \caption{
    Visualization of the ANIE results on the the London Bike dataset: raw and denoised wavelet scaleograms, and t-SNE embedding of the estimated latent factors. \label{fig:london_bike_additional}}
\end{figure}

\subsection{Enron Email Dataset}

For the Enron dataset, as shown on Figure \ref{fig:enron_scaleograms}, we find that both the naive and reconstructed scaleograms capture changes in 2001, which marked the buildup to the company’s bankruptcy. 
\begin{figure}[h]
    \centering

    \subcaptionbox{Timeline of key events in the Enron scandal. \label{tab:enron_events}}{%
        \begin{tabular}{ccl}
            \toprule
            \textbf{Event} & \textbf{Date} & \textbf{Description} \\
            \midrule
            1 & Nov 1999 & Enron launched \\
            2 & Feb 2001 & Jeffrey Skilling takes over as CEO \\
            3 & 14 Aug 2001 & Kenneth Lay takes over as CEO after Skilling resigns \\
            4 & 9 Nov 2001 & Enron restates 3rd quarter earnings revealing massive losses \\
            5 & 29 Nov 2001 & Dynegy deal collapses, ending Enron's last hope for rescue \\
            6 & 10 Jan 2002 & Department of Justice confirms criminal investigation begun \\
            7 & 23 Jan 2002 & Kenneth Lay resigns as CEO amid investigations \\
            8 & 4 Feb 2002 & Lay implicated in plot to inflate profits and hide losses \\
            9 & 24 Apr 2002 & U.S. House passes accounting reform package in response to Enron scandal \\
            \bottomrule
        \end{tabular}
    }

    \vspace{0.8em}

    \begin{subfigure}{\textwidth}
        \centering
        \includegraphics[width=\linewidth]{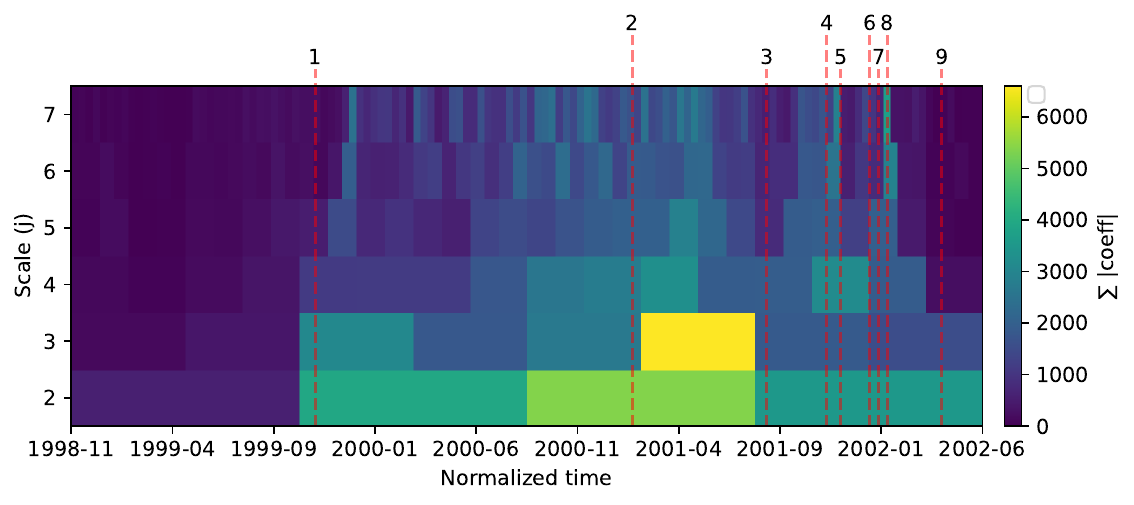}
        \caption{Raw wavelet scaleogram $\|\mathbb{Y}(\psi_{jk})\|_{\mathcal{F}}$.}  
        \label{fig:enron_raw_scaleogram}
    \end{subfigure}

    \vspace{0.5em}

    \begin{subfigure}{\textwidth}
        \centering
        \includegraphics[width=\linewidth]{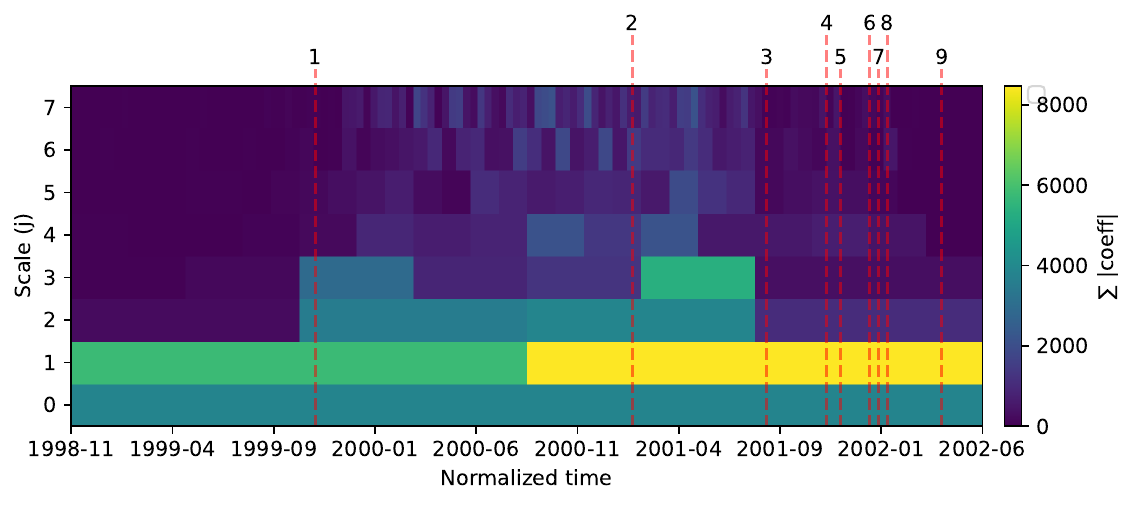}
        \caption{Denoised wavelet scaleogram $\norm{\Sbbhat(\psi_{jk})}_{\Fcal} = 
        \norm{\mUhat^T \Ybb(\psi_{jk}) \mUhat}_{\Fcal}$.}
        \label{fig:enron_denoised_scaleogram}
    \end{subfigure}

    \caption{Discrete wavelet scaleogram analysis of the Enron email dataset. 
    The top table lists the key events annotated in the scalograms below.}
    \label{fig:enron_scaleograms}
\end{figure}

These changes are concentrated in a specific frequency band, illustrating the ability of our method to identify the time scale at which structural shifts occur. Additionally, we observe that, in the raw scaleogram (Figure \ref{fig:enron_raw_scaleogram}), measuring change purely at the edge-level, events following 2001 are associated with changes across a wider range of frequency bands. In contrast, the reconstructed, denoised scaleogram (Figure \ref{fig:enron_denoised_scaleogram}) shows less variation during this later period, suggesting that many of the post-2001 edge-level fluctuations do not correspond to substantial structural changes in the network, or at least not to the same extent as during the major 2001 events.

\end{makechanges}

\end{document}